\documentclass{article}

\usepackage{enumitem}

\usepackage[preprint]{neurips_2026}

\usepackage[utf8]{inputenc} % allow utf-8 input
\usepackage[T1]{fontenc}    % use 8-bit T1 fonts
\usepackage{hyperref}       % hyperlinks
\usepackage{url}            % simple URL typesetting
\usepackage{booktabs}       % professional-quality tables
\usepackage{amsfonts}       % blackboard math symbols
\usepackage{nicefrac}       % compact symbols for 1/2, etc.
\usepackage{microtype}      % microtypography
\usepackage{xcolor}         % colors
\usepackage{subcaption}
\usepackage{amsmath}
\usepackage{amssymb}
\usepackage{mathtools}
\usepackage{amsthm}
\usepackage{amsfonts} 
\usepackage{enumitem}
\usepackage{wrapfig}
\usepackage{multirow}
\usepackage{algorithm}
\usepackage{algorithmic}
\usepackage{float}
\usepackage{mdframed}
\usepackage{graphicx}
\usepackage{algorithm}
\usepackage{caption}
\DeclareMathOperator{\Var}{Var}
\usepackage{listings}
\definecolor{lstbg}{gray}{0.97}
\definecolor{lstkw}{RGB}{0,0,180}
\definecolor{lstcomment}{RGB}{0,120,0}
\definecolor{lststring}{RGB}{163,21,21}
\lstdefinestyle{pythonstyle}{
  language=Python,
  basicstyle=\ttfamily\footnotesize,
  keywordstyle=\color{lstkw}\bfseries,
  commentstyle=\color{lstcomment}\itshape,
  stringstyle=\color{lststring},
  numbers=left,
  numberstyle=\tiny\color{gray},
  numbersep=6pt,
  backgroundcolor=\color{lstbg},
  frame=single,
  framesep=2mm,
  breaklines=true,
  breakatwhitespace=true,
  showstringspaces=false,
  tabsize=4,
  captionpos=b,
  columns=fullflexible,
  keepspaces=true,
}
\newcommand{\mat}[1]{\mathbf{#1}}
\definecolor{darkgreen}{RGB}{0,120,0}
\usepackage[capitalize,noabbrev]{cleveref}

\theoremstyle{plain}
\newtheorem{theorem}{Theorem}[section]

\newtheorem{lemma}[theorem]{Lemma}
\newtheorem{corollary}[theorem]{Corollary}
\theoremstyle{definition}

\newtheorem{assumption}[theorem]{Assumption}

\usepackage{amsmath}

\usepackage[textsize=tiny]{todonotes}
\usepackage[most]{tcolorbox}

\newtcolorbox{remarkbox}{
  enhanced,
  breakable,
  colback=blue!2,
  colframe=black!15,
  boxrule=0.3pt,
  arc=1pt,
  left=8pt,
  right=8pt,
  top=6pt,
  bottom=6pt
}
\title{\textsc{Muon+}: Towards More Effective Muon via One Additional Normalization Step for LLM Pre-training}

\author{%
  Ruijie Zhang,\quad Yequan Zhao, \quad Ziyue Liu,\quad Zhengyang Wang,\\
  \textbf{Yupeng Su,\quad Liyan Tan,\quad Zheng Zhang\textsuperscript{$\dagger$}} \\
  University of California at Santa Barbara\\
  {ruijiezhang@ucsb.edu}, \\ \quad zhengzhang@ece.ucsb.edu
}

\begin{document}

\maketitle

\begin{abstract}
Muon has recently emerged as a strong optimizer for large language model pre-training, orthogonalizing the momentum matrix via Newton--Schulz polar iterations. A natural intuition is that polar iterations, by flattening the singular spectrum to all ones, should also eliminate column- and row-wise norm imbalance in the update. We show that this is not true in practice: practical polar steps can substantially \emph{amplify} the imbalance. We term this the \emph{post-polar imbalanced update} problem, and prove that such imbalance tightens the second-order term in a blockwise descent analysis, weakening Muon's per-step descent guarantee. Motivated by this analysis, we propose \textsc{Muon+}, a one-line fix that inserts a single normalization step after polar orthogonalization. \textsc{Muon+} adds no optimizer state. Across pre-training experiments on GPT and LLaMA models from 60M to 7B parameters, spanning both compute-optimal budgets and extended token-to-parameter ratios up to approximately 200, \textsc{Muon+} consistently outperforms Muon in terms of training and validation perplexity, leading to significant overall pre-training speedup.
\end{abstract}

\section{Introduction}

Based on the empirical observation of scaling laws \cite{kaplan2020scaling, hoffmann2022training, kumar2025scaling}, powerful foundation models such as GPT, DeepSeek, LLaMA, and Gemini \cite{achiam2023gpt,liu2024deepseek, grattafiori2024llama, team2023gemini} have been trained and widely deployed. Nevertheless, as the sizes of both model parameters and training datasets reach extreme levels, the computational cost of pre-training has become prohibitively high. This challenge has motivated increasing research dedicated to improving pre-training efficiency \cite{mehmood2023efficient, han2024sltrain, cesistasqueezing, zhao2024galore, zhang2025lax,liu2025cola}, with a particular emphasis on the critical role of optimizers. Although Adam \cite{kingma2014adam} and AdamW \cite{loshchilov2017decoupled} are still the dominant optimizers, numerous efficient optimizers have been proposed to reduce the computing or memory cost of large-scale pre-training \cite{kingma2014adam, loshchilov2017decoupled,liu2024sophia,jordan2024muon,yuan2024mars,vyas2025soap,Li_2018,li2018preconditionermatrixliegroup,pooladzandi2024curvatureinformedsgdgeneralpurpose,li2022blackboxliegroup,li2024stochastichessianfittingslie,pethick2025trainingdeeplearningmodels}.

Among these, the Muon optimizer~\cite{jordan2024muon} has recently emerged as a strong choice for pre-training. Its key idea is to orthogonalize the momentum matrix via Newton--Schulz iterations, flattening its singular spectrum to mitigate ``rank collapse'' of the update. Muon has been shown to scale to massive foundation models~\cite{liu2025muon} and is now integral to production-scale systems such as Kimi and GLM~\cite{team2025kimi,ding2025kimi,zeng2025glm,team2025kimivl}. A rapidly growing literature further studies Muon's efficiency, scalability, and theoretical properties~\cite{zhang2026teon,bernstein2025deriving,khaled2025muonbp,amsel2025polar,li2025normuon,kovalev2025understanding}.

\paragraph{Muon update rule.}
Unlike Adam/SGD-based optimizers, Muon \cite{jordan2024muon} operates on a matrix rather than a vector. By enforcing orthogonalization on the gradient, Muon prevents the rank collapse of the gradient by replacing the singular value matrix with an identity matrix.
Let $\eta$ and $\mu$ denote the learning rate and the momentum coefficient, respectively. Assume that $\mat{W}_t \in \mathbb{R}^{m \times n}$ is the layer being adapted at iteration $t$, $\mat{G}_t \in \mathbb{R}^{m \times n}$ is its stochastic gradient, and $\mat{M}_t$ is the gradient momentum at iteration $t$. The Muon update is given by
\begin{equation}
\begin{aligned}
\mat{M}_t &= \mu \mat{M}_{t-1} + (1 - \mu) \mat{G}_t \\
\mat{O}_t &= \mathrm{Ortho}(\mat{M}_t) \\
\mat{W}_{t} &= \mat{W}_{t-1} - \eta \cdot \sqrt{m/n} \cdot \mat{O}_t
\end{aligned}
\label{eq:muon_update_rule}
\end{equation}
where $\mathrm{Ortho}(\cdot)$ denotes the semi-orthogonal matrix function closest to the input matrix \cite{higham2008functions}. Specifically, if the SVD of the input matrix $\mat{M}$ is $\mat{M} = \mat{U} \mat{\Sigma} \mat{V}^T$, then $\mathrm{Ortho}(\mat{M}) := \mat{U} \mat{V}^T$. In practice, the Newton-Schulz iteration process \cite{higham2008functions} is commonly used to approximate the SVD. The dimensional pre-factor $\sqrt{m/n}$ was suggested by \cite{bernstein2025deriving} for better scalability.

In this work, we identify a limitation of Muon, which we term the \emph{post-polar imbalanced update} problem, and propose a simple yet effective extension, \textsc{Muon+}, to address it. We summarize our contributions as follows:
\begin{itemize}[leftmargin=*]
    \item \textbf{A counter-intuitive phenomenon in Muon: update imbalance.} We identify the \emph{post-polar imbalanced update} problem: under practical Newton--Schulz settings, polar iterations can \emph{amplify} rather than reduce column/row norm imbalance.

    \item \textbf{Theoretical analysis.} We prove (i) that update imbalance weakens Muon's convergence (Theorem~\ref{thm:imbalance_bad_informal}), and (ii) that the expected column/row variance shift of one polar step admits a closed-form characterization with a provably positive amplification region (Lemma~\ref{lemma:var_diff} and Corollary~\ref{cor:positive_shift_exists}).

    \item \textbf{A simple yet effective fix: \textsc{Muon+}.} A single post-orthogonalization normalization mitigates the imbalance with zero memory overhead and negligible computing overhead.

    \item \textbf{Empirical validation on GPT and LLaMA.} On GPT and LLaMA pre-training from 60M to 7B parameters, spanning compute-optimal budgets and token-to-parameter ratios up to $\sim$200, \textsc{Muon+} consistently outperforms Muon across all evaluated settings. It reduces validation perplexity by up to \textbf{2.02}, improves average downstream zero-shot accuracy by up to \textbf{1.3\%} across \textbf{7} tasks, and speeds up the pre-training up to \textbf{37.1\%}, while requiring \textbf{zero} additional optimizer states.
\end{itemize}
\begin{figure}[t]
    \centering
    \includegraphics[width=\linewidth]{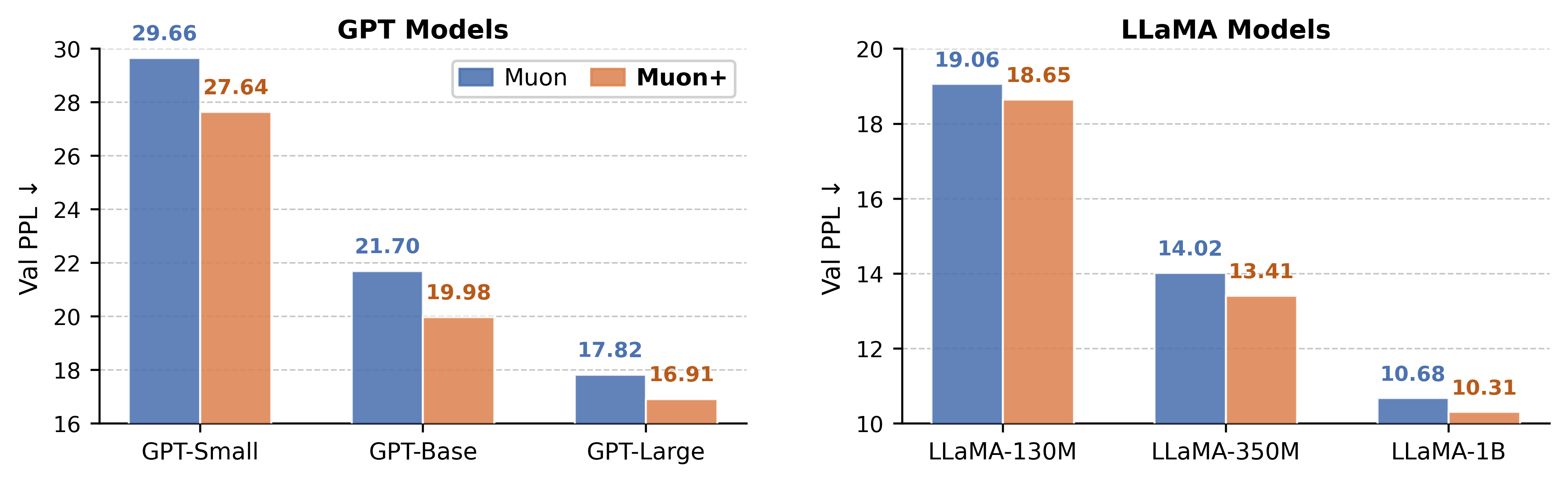}
    \caption{Pre-training GPT and LLaMA models at scales ranging from 130M to 1B parameters under compute-optimal settings. Quantitative results are provided in Section~\ref{sec:experiments}. \textbf{\textsc{Muon+}} consistently outperforms \textsc{Muon} across all runs. We also conduct larger-scale and overtraining experiments for both GPT and LLaMA; the results are presented in Section~\ref{sec:experiments}.}
    \label{fig:muonp_vs_muon}
\end{figure}
\section{Method}
\label{sec:method}
This section first shows that Muon's polar iteration can \emph{amplify} column/row update imbalance (Section~\ref{sec:amp}), then proves that such imbalance weakens Muon's local descent guarantees (Section~\ref{sec:imbalanced_undesired}), and finally presents \textsc{Muon+} as a fix (Section~\ref{sec:muon+_method}). Proofs are deferred to Appendices~\ref{apx:imbalance_convergence} and~\ref{apx:amp_proof}.
\begin{figure}[t]
    \centering

    \begin{subfigure}[t]{\linewidth}
        \centering
        \includegraphics[width=\linewidth]{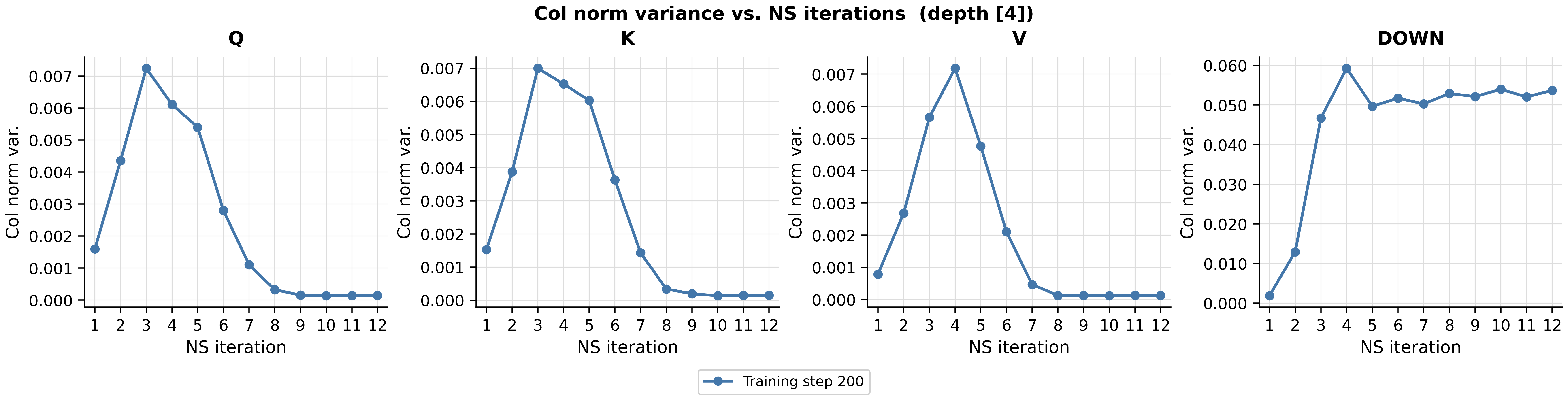}
        \caption{Variance trend with respect to the number of Newton--Schulz iterations. The imbalance remains amplified even under a typical number of Newton--Schulz iterations (e.g., 5). For non-square layers, the variance in one direction cannot be reduced to zero.}
        \label{fig:var_trend_ns}
    \end{subfigure}

    \vspace{0.8em}

    \begin{subfigure}[t]{\linewidth}
        \centering
        \includegraphics[width=\linewidth]{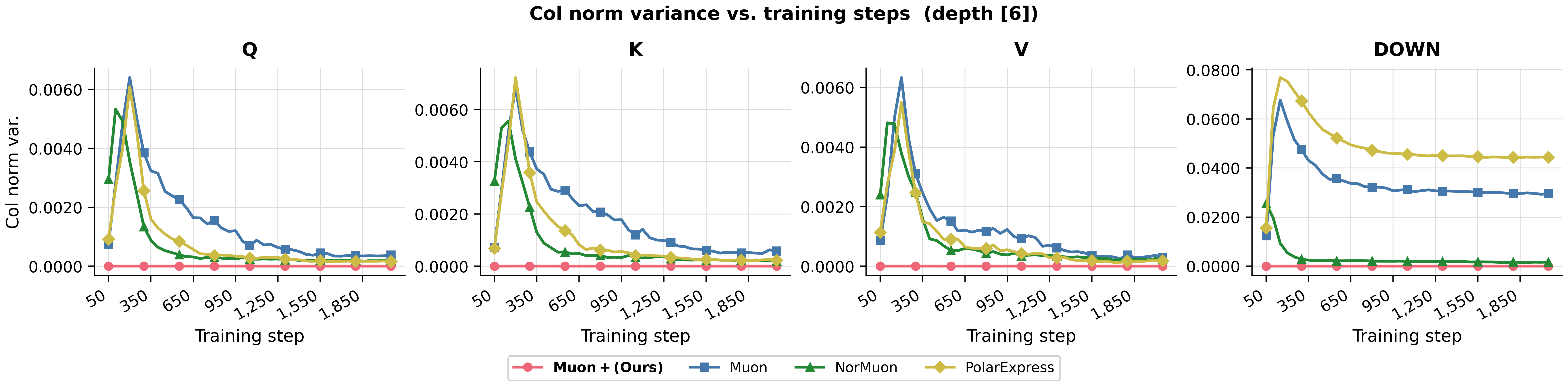}
        \caption{Variance trend under different Muon variants. Existing variants fail to resolve the issue, whereas Muon+ maintains zero variance throughout training (see Figure~\ref{fig:var_trend_all} for more details).}
        \label{fig:muon_variants_var}
    \end{subfigure}

    \caption{Comparison of variance behavior under different settings during the realistic training of a 60M LLaMA model on 1.1B FineWeb tokens.}
    \label{fig:variance_comparison}
\end{figure}

\subsection{Update Imbalance Caused by Muon's Polar Iterations }

\label{sec:amp}

\paragraph{Update imbalance.}
Muon is designed to reduce \emph{spectral} imbalance by flattening the singular values of the update matrix. In this work, we study a complementary notion of imbalance in the parameter space. We define \emph{update imbalance} as highly non-uniform magnitudes across the elements of the update matrix (e.g., some columns or rows having much larger $\ell_2$ norm than others (Fig.~\ref{fig:3d_imbalance})). We measure this imbalance by the variance of column or row norms. 
For any matrix $\mat{X}$, define
\[
s(\mat{X})
:=
\bigl(\|\mat{X}_{1:}\|_2^2,\dots,\|\mat{X}_{n:}\|_2^2\bigr).
\]
The row-norm imbalance is measured by
\[
\Var\!\bigl(s(\mat{X})\bigr)
=
\frac{1}{n}\sum_{i=1}^n
\left(
\|\mat{X}_{i:}\|_2^2-\frac{1}{n}\|\mat{X}\|_F^2
\right)^2.
\]
The column-norm imbalance can be defined likewise.

\paragraph{Post-polar imbalanced update.}
\begin{wrapfigure}{r}{0.52\textwidth}
    \centering
    \vspace{-0.5\baselineskip}
    \includegraphics[width=0.50\textwidth]{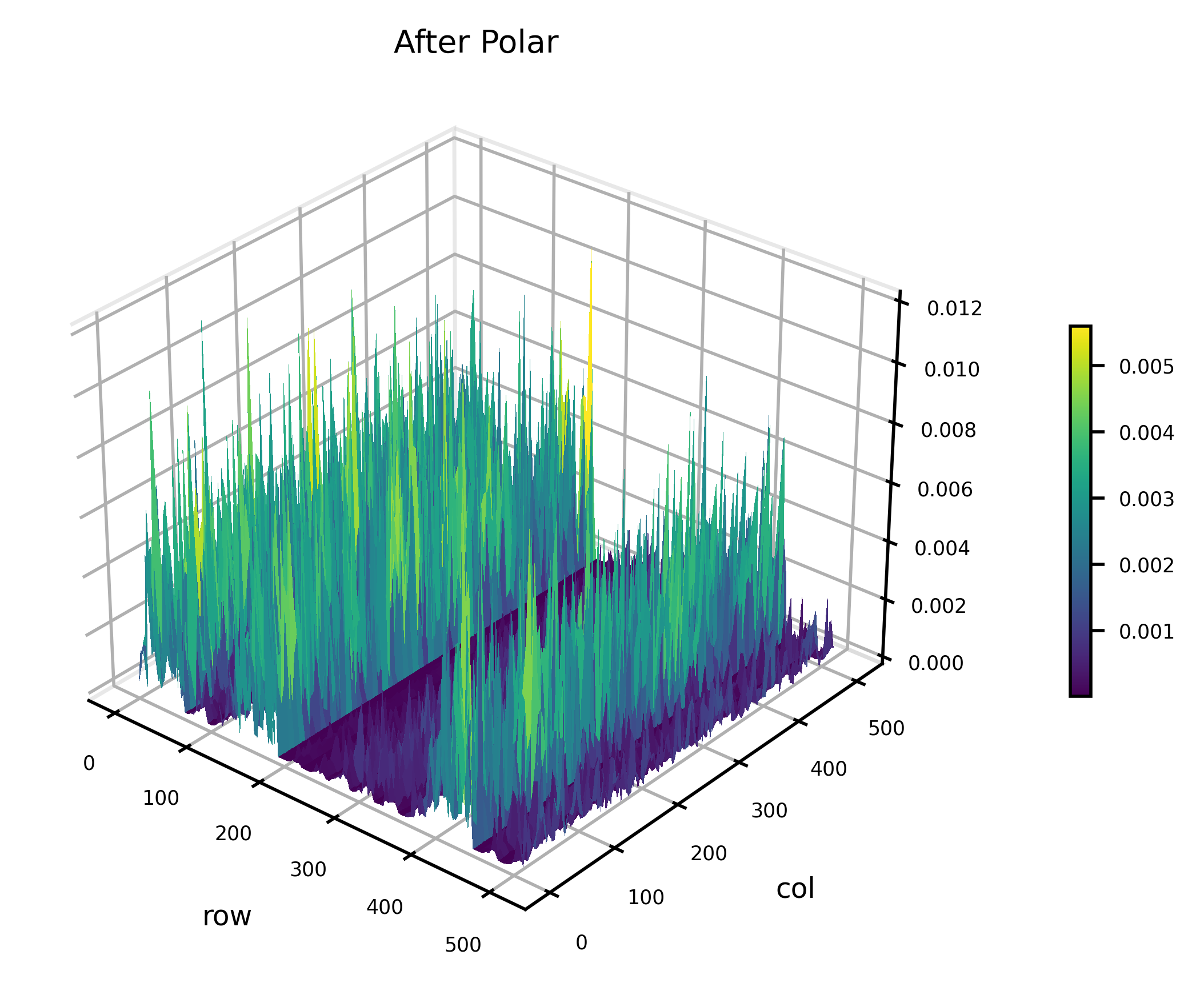}
    \caption{Layer-6 Q Projection Update matrix of Muon. The update matrix is highly unbalanced. The trends over iteration/training are shown in Figure~\ref{fig:variance_comparison}.}
    \label{fig:3d_imbalance}
    \vspace{-0.5\baselineskip}
\end{wrapfigure}
For a square layer, Muon replaces $\mat M = \mat U\mat\Sigma\mat V^\top$ with $\mat Q=\mat U\mat V^\top$, \textbf{whose rows and columns all have unit $\ell_2$ norm. Ideally, one would expect that the polar step should eliminate row/column norm imbalance in the update \(\mat Q\)}. In practice we find the opposite: with a typical number of Newton-Schulz iterations (5 steps), the variance is significantly amplified (Figure~\ref{fig:variance_comparison}). In principle, one may mitigate the imbalance by running a large number of Newton-Schulz iterations, but this is computationally infeasible in LLM pre-training. For a non-squared matrix, we cannot reduce the variance of one direction to $0$ even if we use exact SVD. 

To understand why this phenomenon occurs, we characterize the variance shift induced by one polar step. Let
\[
\mat{M} \in \mathbb{R}^{n \times n}, \quad \|\mat{M}\|_F^2 = 1, \quad  \mat{M} = \mat{U \Sigma V}^\top
\]
where
\[
\Sigma = \operatorname{diag}(\sigma_1,\dots,\sigma_n), \qquad 1 \ge \sigma_k \ge 0,
\]
Define one Newton-Schulz step by
\[
\mat{Q}
:= a\mat{M}  + b(\mat{M} \mat{M} ^\top)\mat{M}  + c(\mat{M} \mat{M} ^\top)^2\mat{M} ,
\]
with coefficients \(a,b,c \in \mathbb{R}\). Introduce the quintic polynomial
\[
\varphi(x) := ax + bx^3 + cx^5.
\]
Then
\[
\mat{Q} = \mat{U}  \varphi(\Sigma) \mat{V} ^\top.
\]

\begin{lemma}[Expected variance shift of one polar step]
\label{lemma:var_diff}
Assume that \(\mat{U}\) is Haar distributed on the orthogonal group\footnote{The orthogonal group is the group of all real $n \times n$ matrices \(\mat{U}\) satisfying \(\mat{U}^\top\mat{U}=\mat{I}\).} Let \(\mat{M}\) and \(\mat{Q}\) be defined as above, and let
\[
\lambda_k:=\sigma_k^2,
\qquad
g(\lambda):=\varphi(\sqrt{\lambda})^2
=\lambda(a+b\lambda+c\lambda^2)^2.
\]
Then, for each pair \(1\le i<j\le n\), there exists
\[
\xi_{ij}\in\bigl(\min\{\lambda_i,\lambda_j\},\,\max\{\lambda_i,\lambda_j\}\bigr)
\]
such that
\begin{equation}
\label{eq:var_shift_mvt}
\mathbb{E}\!\left[\Var(s(\mat{Q}))-\Var(s(\mat{M}))\right]
=
\frac{2}{n^2(n+2)}
\sum_{1\le i<j\le n}
(\lambda_i-\lambda_j)^2\bigl(g'(\xi_{ij})^2-1\bigr).
\end{equation}
\end{lemma}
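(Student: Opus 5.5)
Since $\mat V$ is orthogonal, it drops out of every row norm: $\|\mat M_{i:}\|_2^2 = (\mat U\Sigma^2\mat U^\top)_{ii} = \sum_k U_{ik}^2\lambda_k$. Likewise $\|\mat Q_{i:}\|_2^2 = \sum_k U_{ik}^2 g(\lambda_k)$. The plan is therefore to compute, for an arbitrary weight vector $h=(h_1,\dots,h_n)$, the expected row-norm variance of the matrix $\mat U\operatorname{diag}(h)\mat U^\top$, and then apply the result with $h=\lambda$ and with $h=g(\lambda)$.

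Write $r_i=\sum_k U_{ik}^2h_k$. Then
\[
\Var(s)=\frac1n\sum_i r_i^2-\Bigl(\frac1n\sum_i r_i\Bigr)^2,
\qquad \sum_i r_i=\sum_k h_k
\]
deterministically, because the columns of $\mat U$ are unit vectors. Only $\mathbb{E}[r_i^2]=\sum_{k,l}h_kh_l\,\mathbb{E}[U_{ik}^2U_{il}^2]$ is needed. The row $\mat U_{i:}$ is uniform on the sphere $S^{n-1}$, so the standard moments apply:
\[
\mathbb{E}[U_{ik}^4]=\frac{3}{n(n+2)},\qquad \mathbb{E}[U_{ik}^2U_{il}^2]=\frac{1}{n(n+2)}\quad(k\ne l).
\]
Hence
\[
\mathbb{E}[r_i^2]=\frac{2\sum_k h_k^2+(\sum_k h_k)^2}{n(n+2)}.
\]
Substituting gives
\[
\mathbb{E}\Var(s)=\frac{2\sum_k h_k^2+(\sum_k h_k)^2}{n(n+2)}-\frac{(\sum_k h_k)^2}{n^2}
=\frac{2\bigl(n\sum_k h_k^2-(\sum_k h_k)^2\bigr)}{n^2(n+2)}.
\]
Lagrange's identity $n\sum_k h_k^2-(\sum_k h_k)^2=\sum_{i<j}(h_i-h_j)^2$ then yields
\[
\mathbb{E}\Var(s)=\frac{2}{n^2(n+2)}\sum_{i<j}(h_i-h_j)^2 .
\]
This constant matches the one in the statement, which confirms the normalization.

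Taking the difference of the two cases gives
\[
\mathbb{E}[\Var(s(\mat Q))-\Var(s(\mat M))]=\frac{2}{n^2(n+2)}\sum_{i<j}\Bigl[(g(\lambda_i)-g(\lambda_j))^2-(\lambda_i-\lambda_j)^2\Bigr].
\]
For each pair with $\lambda_i\ne\lambda_j$, the mean value theorem applied to the polynomial $g$ gives $g(\lambda_i)-g(\lambda_j)=g'(\xi_{ij})(\lambda_i-\lambda_j)$ with $\xi_{ij}$ strictly between $\lambda_i$ and $\lambda_j$. This produces the factor $(\lambda_i-\lambda_j)^2(g'(\xi_{ij})^2-1)$. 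When $\lambda_i=\lambda_j$ the term vanishes and the choice of $\xi_{ij}$ does not matter, though the open interval is then empty. This degenerate case should be noted, either by assuming distinct singular values or by reading the interval as closed. The constraint $\|\mat M\|_F=1$ is not needed for the identity itself.

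The only delicate step is getting the spherical fourth moments exactly right, including the constant $n(n+2)$. I will verify them from the Gaussian representation $u=z/\|z\|$: independence of $\|z\|$ and $u$ gives $\mathbb{E}z_k^4=3=\mathbb{E}\|z\|^4\,\mathbb{E}u_k^4$ with $\mathbb{E}\|z\|^4=n(n+2)$. I also need to check that the Haar assumption on $\mat U$ makes each row uniform on the sphere, which is immediate from invariance under right multiplication. Everything else is bookkeeping.
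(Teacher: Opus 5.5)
Your proposal is correct and follows the paper's proof essentially step for step: row energies $s_i=\sum_k u_{ik}^2\alpha_k$, the spherical fourth moments together with the deterministic total $\sum_i s_i=\sum_k\alpha_k$, Lagrange's identity, and the mean value theorem applied pairwise. Your two additions tighten that argument rather than change its route. First, the Gaussian representation actually justifies the moments; exchangeability plus $\sum_k u_{ik}^2=1$, which is what the paper cites, yields only one relation between $\mathbb{E}[u_{ik}^4]$ and $\mathbb{E}[u_{ik}^2u_{i\ell}^2]$. Second, you correctly flag that for $\lambda_i=\lambda_j$ the open interval containing $\xi_{ij}$ is empty, a case the paper passes over by noting only that both sides vanish.
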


\begin{corollary}[Existence of a positive expected variance shift]
\label{cor:positive_shift_exists}
Let \(\varphi(x)\) and \(g(\lambda)\) be defined as above.
Assume that \(a=\varphi'(0)>1\). Then there exists \(\delta>0\) such that, whenever
\[
\lambda_1,\dots,\lambda_n\in[0,\delta]
\]
are not all equal,
\[
\mathbb{E}\!\left[\Var(s(\mat Q))-\Var(s(\mat M))\right]>0.
\]
In particular, the expected variance shift of one polar step can be positive.
\end{corollary}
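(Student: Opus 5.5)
The plan is to start from the closed form in Lemma~\ref{lemma:var_diff}. Since $\frac{2}{n^2(n+2)}>0$, it suffices to show that every summand $(\lambda_i-\lambda_j)^2\bigl(g'(\xi_{ij})^2-1\bigr)$ is nonnegative, and that at least one is strictly positive, whenever all $\lambda_k$ lie in a small interval $[0,\delta]$ and are not all equal.

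The key step is to control $g'$ near the origin. Since $g(\lambda)=\lambda(a+b\lambda+c\lambda^2)^2$ is a polynomial, we have
\[
g'(\lambda)=(a+b\lambda+c\lambda^2)^2+2\lambda(a+b\lambda+c\lambda^2)(b+2c\lambda),
\]
which gives $g'(0)=a^2$. The hypothesis $a>1$ then yields $g'(0)^2=a^4>1$. Because $g'$ is continuous, there is $\delta>0$ with $g'(\lambda)^2>1$ for all $\lambda\in[0,\delta]$. For instance, one can choose $\delta$ so that $g'(\lambda)>\tfrac{1+a^2}{2}>1$ on $[0,\delta]$.

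Now take $\lambda_1,\dots,\lambda_n\in[0,\delta]$. Each $\xi_{ij}$ from Lemma~\ref{lemma:var_diff} lies strictly between $\lambda_i$ and $\lambda_j$, so it also lies in $[0,\delta]$, and hence $g'(\xi_{ij})^2-1>0$. The case $\lambda_i=\lambda_j$ deserves a remark: there the open interval is empty, but the summand carries the factor $(\lambda_i-\lambda_j)^2=0$, so it contributes zero under any interpretation of $\xi_{ij}$. This degenerate case is the only subtlety, and it is harmless. Every term is therefore $\ge 0$. Since the $\lambda_k$ are not all equal, some pair has $\lambda_i\ne\lambda_j$, and that term is strictly positive, so the whole sum is strictly positive.

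Finally, note that $\lambda_k=\sigma_k^2$ with $\sum_k\lambda_k=\|\mat M\|_F^2=1$. Such a configuration with all $\lambda_k\le\delta$ requires $n\delta\ge 1$, which holds for $n$ large or for $\delta$ as small as the continuity argument permits. If the normalization is read as binding, one can shrink $\delta$ only as needed or take $n$ large to ensure a valid non-constant spectrum exists. The expected-shift claim itself needs only the pointwise positivity argument above. The main step is the continuity bound on $g'$; the rest is sign bookkeeping.
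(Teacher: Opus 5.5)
Your proposal is correct and follows the paper's proof essentially step for step: compute $g'(0)=a^2>1$, use continuity of $g'$ to get $\delta$ with $g'(\lambda)^2>1$ on $[0,\delta]$, place each $\xi_{ij}$ in $[0,\delta]$, and conclude termwise from Lemma~\ref{lemma:var_diff}, noting that at least one term is strictly positive. One small correction to your last paragraph, a point the paper's proof does not address: shrinking $\delta$ makes $\sum_k\lambda_k=1$ harder to satisfy, not easier, so the ``in particular'' claim needs $n>1/\delta$ (strictly, so that a non-constant spectrum exists) rather than a smaller $\delta$.
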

Detailed proof is provided in Appendix~\ref{apx:amp_proof}
%\begin{remarkbox}

\paragraph{Remark.}
Lemma~\ref{lemma:var_diff} and Corollary~\ref{cor:positive_shift_exists} suggest a natural partition of the spectral interval into amplification and reduction regions. Specifically, the derivative profile of \(g\), which is fully determined by the Newton--Schulz coefficients \(a,b,c\), divides the spectrum into intervals where \(|g'(\lambda)|>1\) and intervals where \(|g'(\lambda)|<1\). This partition determines the sign of the pairwise contributions to the variance shift: pairs associated with the former regions contribute positively, whereas pairs associated with the latter regions contribute negatively.
%\end{remarkbox}
\begin{wrapfigure}{r}{0.42\linewidth}
    \centering
    \includegraphics[width=\linewidth]{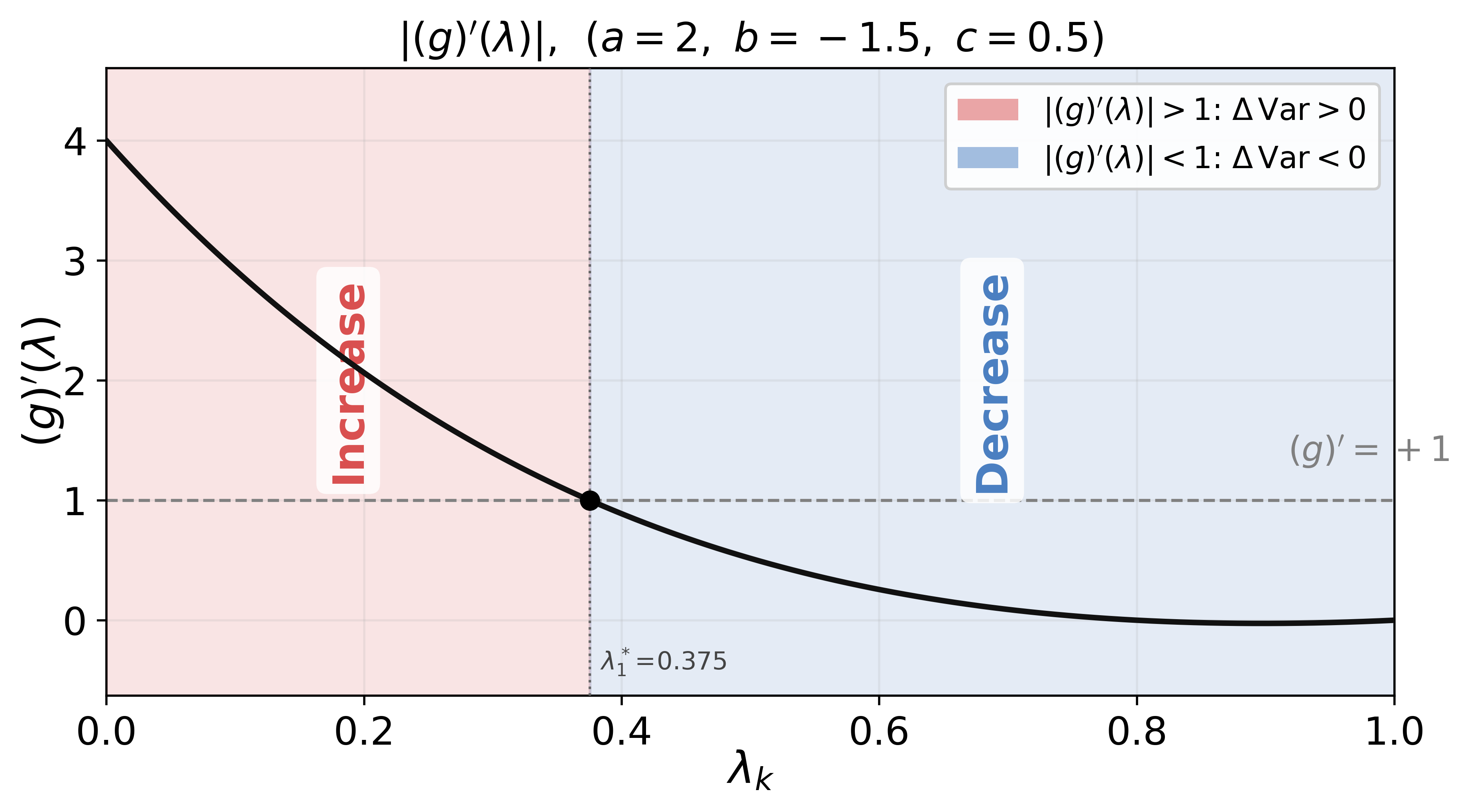}
    \caption{\small Regions of imbalance amplification and reduction. When $\xi_{ij}$ lies in the red (blue) region, the imbalance is amplified (reduced). We also provide figures for other commonly used coefficients in Appendix~\ref{apx:more_coeff}.}
    \label{fig:variance_shift}
    \vspace{-10pt}
\end{wrapfigure}
As an example, consider the commonly used coefficients
\[ 
(a,b,c)=(2,\,-1.5,\;0.5).
\]

Figure~\ref{fig:variance_shift} plots \(g'(\lambda)\) together with the thresholds \(\pm 1\). By Lemma~\ref{lemma:var_diff}, each eigenvalue pair \((\lambda_i,\lambda_j)\) contributes through an intermediate point \(\xi_{ij}\), with sign determined by \(g'(\xi_{ij})^2-1\). Thus, pairs with \(|g'(\xi_{ij})|>1\) increase the variance, while those with \(|g'(\xi_{ij})|<1\) decrease it. The overall variance shift is the sum of these pairwise effects. As the number of Newton--Schulz iterations grows, all \(\lambda_k\) converge to \(1\), eventually entering the right variance-reduction region in Figure~\ref{fig:variance_shift}. Therefore, the variance decreases asymptotically as the number of iterations tends to infinity.

We also investigate the variance trend with respect to the number of Newton--Schulz iterations under real training dynamics in Figure~\ref{fig:var_trend_ns}. As shown in the figure, when the number of Newton--Schulz iterations is insufficient, the variance can be significantly amplified; in particular, for a commonly used choice of 5 iterations, the variance still remains amplified. More importantly, even in the limit of infinitely many iterations, for a non-square layer, the variance along one direction cannot be reduced to zero (e.g., the down-projection layer in Figure~\ref{fig:var_trend_ns}). We will later show that Muon+ can further improve performance exactly in this regime: applying Muon+ along the direction whose variance cannot be eliminated still brings additional gains (see Table~\ref{tab:svd_ablation}).

We further examine variants such as NorMuon\cite{li2025normuon} and PolarExpress\cite{amsel2025polar} under the same realistic pre-training dynamics. Figure~\ref{fig:muon_variants_var} shows that these variants still fail to eliminate the variance imbalance.

\subsection{Imbalanced Updates Hurt Convergence in Muon}
\label{sec:imbalanced_undesired}

An imbalanced update can make optimization less effective. Intuitively, excessively large updates may cause instability, while excessively small updates leave some parts of the network nearly unchanged. We formalize this intuition using a \textit{blockwise descent analysis}~\cite{tomihari2025understanding}.

\begin{theorem}[Imbalance weakens Muon local descent guarantee]
\label{thm:imbalance_bad_informal}
Fix any partition of the $mn$ entries of $\mat W_t$ into $B$ disjoint blocks, and for any vectorization $\mat{x}\in\mathbb R^{mn}$ of matrix $\mat{X} \in \mathbb R^{m \times n}$, let $[\mat{x}]_b$ denote its elements in block $b$. Consider the Muon-style orthogonalized update
\[
\mat{W}_{t+1}=\mat{W}_t-\eta_t\,\mat{O}_t,
\qquad
\mat{O}_t := \mathrm{Ortho}(\nabla L(\mat{W}_t)).
\]
Under local smoothness and a local near-block-diagonal Hessian structure with respect to the chosen partition, the one-step loss decrease satisfies
\[
L(\mat{W}_{t+1})-L(\mat{W}_t)
\;\le\;
-\,\eta_t\,\|\nabla L(\mat{W}_t)\|_{*}
\;+\;\tfrac{\eta_t^{\,2}}{2}\,\widetilde{\Lambda}_{O,t}
\;+\;\mathcal O\!\bigl(\eta_t^{\,3}\,\|\mat{O}_t\|_F^{\,3}\bigr),
\]
\end{theorem}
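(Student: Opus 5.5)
The plan is a second-order Taylor expansion of $L$ along the update direction, followed by a blockwise bound on the quadratic term that uses the near-block-diagonal Hessian.

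\textbf{Step 1 (Taylor expansion).} Write $\mat{g}=\mathrm{vec}(\nabla L(\mat W_t))$ and $\mat o=\mathrm{vec}(\mat O_t)$. Local smoothness means the Hessian is locally Lipschitz on a neighborhood containing the segment $[\mat W_t,\mat W_{t+1}]$. Taylor's theorem with integral remainder then gives
\[
L(\mat W_{t+1})-L(\mat W_t) = -\eta_t\langle \mat g,\mat o\rangle + \tfrac{\eta_t^2}{2}\,\mat o^\top \mat H_t \mat o + R_3,
\qquad |R_3|\le \tfrac{\rho}{6}\eta_t^3\|\mat o\|_2^3,
\]
where $\mat H_t=\nabla^2L(\mat W_t)$ and $\rho$ is the local Lipschitz constant of the Hessian. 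Since $\|\mat o\|_2=\|\mat O_t\|_F$, the remainder is the $\mathcal O(\eta_t^3\|\mat O_t\|_F^3)$ term.

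\textbf{Step 2 (first-order term).} Take the SVD $\nabla L=\mat U\mat\Sigma\mat V^\top$, so that $\mat O_t=\mat U\mat V^\top$. Then
\[
\langle \nabla L,\mat O_t\rangle=\mathrm{tr}(\mat V\mat\Sigma\mat U^\top\mat U\mat V^\top)=\mathrm{tr}(\mat\Sigma)=\|\nabla L\|_*,
\]
which is exactly the linear term in the statement. This step is routine.

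\textbf{Step 3 (second-order term, the main point).} Decompose the Hessian as $\mat H_t=\mat H^{\mathrm{bd}}+\mat E$. Here $\mat H^{\mathrm{bd}}=\mathrm{blockdiag}(\mat H_{bb})$ and $\mat E$ collects the off-diagonal blocks. The near-block-diagonal assumption should give $\|\mat E\|_2\le\epsilon$, or alternatively that $\mat E$ is absorbed into the definition of $\widetilde\Lambda_{O,t}$. Let $L_b:=\lambda_{\max}(\mat H_{bb})^+$ be the local curvature of block $b$. Then
\[
\mat o^\top\mat H_t\mat o\le \sum_{b=1}^B L_b\,\|[\mat o]_b\|_2^2+\epsilon\|\mat o\|_2^2=:\widetilde\Lambda_{O,t}.
\]

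\textbf{Step 4 (link to imbalance).} I would then rewrite $\widetilde\Lambda_{O,t}$ to expose the dependence on imbalance. Set $p_b:=\|[\mat o]_b\|^2/\|\mat o\|^2$, so that the $p_b$ form a probability vector over blocks, and let $\bar L$ be the average of the $L_b$. Then
\[
\sum_b L_b p_b\|\mat o\|^2=\|\mat o\|^2\Bigl(\tfrac1B\sum_b L_b+B\,\mathrm{Cov}_b(L_b,p_b)\Bigr).
\]
The covariance term equals $\sum_b(L_b-\bar L)(p_b-1/B)$ with $\sum_b(p_b-1/B)=0$. By Cauchy--Schwarz it is bounded by $\sqrt{B\,\Var(L)}\cdot\sqrt{B\,\Var(p)}$. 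So the second-order penalty grows with the block-norm variance of $\mat O_t$, which is the column/row imbalance $\Var(s(\mat O_t))$ when the blocks are taken to be rows or columns. It vanishes when the update norms are balanced across blocks, and is worst when mass concentrates on high-curvature blocks.

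I expect Step 4 to be the main obstacle. The difficulty is stating $\widetilde\Lambda_{O,t}$ so that it is provably monotone in imbalance rather than only bounded by it. My first attempt would be to take the worst case over curvature profiles with fixed mean and variance. That maximum equals the Cauchy--Schwarz bound, which is tight and increasing in $\Var(p)$.
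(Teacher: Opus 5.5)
Your Steps 1--3 are essentially the paper's proof. The paper uses the same Hessian-Lipschitz Taylor bound with remainder $\frac{\rho_H}{6}\eta_t^3\|o_t\|_2^3$ and the same trace identity $\langle \nabla L(\mat W_t),\mat O_t\rangle_F=\mathrm{tr}(\mat\Sigma)=\|\nabla L(\mat W_t)\|_*$. It also splits the Hessian into its block-diagonal part plus a remainder of operator norm at most $\delta_D$, then bounds each block by its operator norm. Two bookkeeping differences are harmless: your $\lambda_{\max}(\cdot)^+$ is at most the paper's $\|[\nabla^2 L]_b\|_2$, so your bound implies the paper's; and where you fold $\epsilon\|o_t\|_2^2$ into $\widetilde\Lambda_{O,t}$, the paper keeps $\frac{\eta_t^2}{2}\delta_D\|o_t\|_2^2$ explicit in the formal version and drops it via $\delta_D\to 0$ in the informal statement.

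Step 4 is not needed for the stated inequality. The paper proves no monotonicity in imbalance and instead appeals to an empirically observed gradient--Hessian correlation, so your covariance identity and Cauchy--Schwarz bound are a valid but optional interpretive addition.
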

where $\|\cdot\|_{*}$ denotes the nuclear norm, and $\widetilde{\Lambda}_{O,t}$ is the \emph{block-weighted Hessian curvature} at step $t$:
\begin{equation}
\label{eq:lambda_O_block}
\widetilde{\Lambda}_{O,t}
\;:=\;
\sum_{b=1}^{B}
\big\|[\nabla^2 L(\mat{W}_t)]_b\big\|_2
\;\|[\mat{o}_t]_b\|_2^{\,2},
\qquad
\mat{o}_t := \mathrm{vec}(\mat O_t),
\end{equation}
Prior work~\cite{zhang2024transformers,jiang2023does,zhang2019gradient,crawshaw2022robustness,tomihari2025understanding} has identified a positive \emph{gradient--Hessian correlation}: parameter blocks with larger gradient norms tend to have larger diagonal Hessian blocks. We further find that this correlation is preserved after the polar step in Muon (see Appendix~\ref{apx:update-hessian}). Under this correlation,  larger imbalance in the column gradient norms leads to a larger $\widetilde{\Lambda}_{O,t}$ and hence a smaller largest stable step size. A precise definition and the full proof are provided in Appendix~\ref{apx:imbalance_convergence}. 

The above theoretical analysis is consistent with our experimental observation in Figure~\ref{fig:lr_ablation}: update imbalance reduces the largest stable step size of Muon, whereas our proposed \textsc{Muon+} (which will be explained soon) reduces this imbalance and thus improves training stability and performance.

%\zz{We can show the training curves of a small-size transformer here, demonstrating that Muon starts to become unstable earlier than Muon+ when we increase learning rates.}\rj{Done}
\begin{figure}[t]
    \centering
    \begin{subfigure}[t]{0.48\linewidth}
        \centering
        \includegraphics[width=\linewidth]{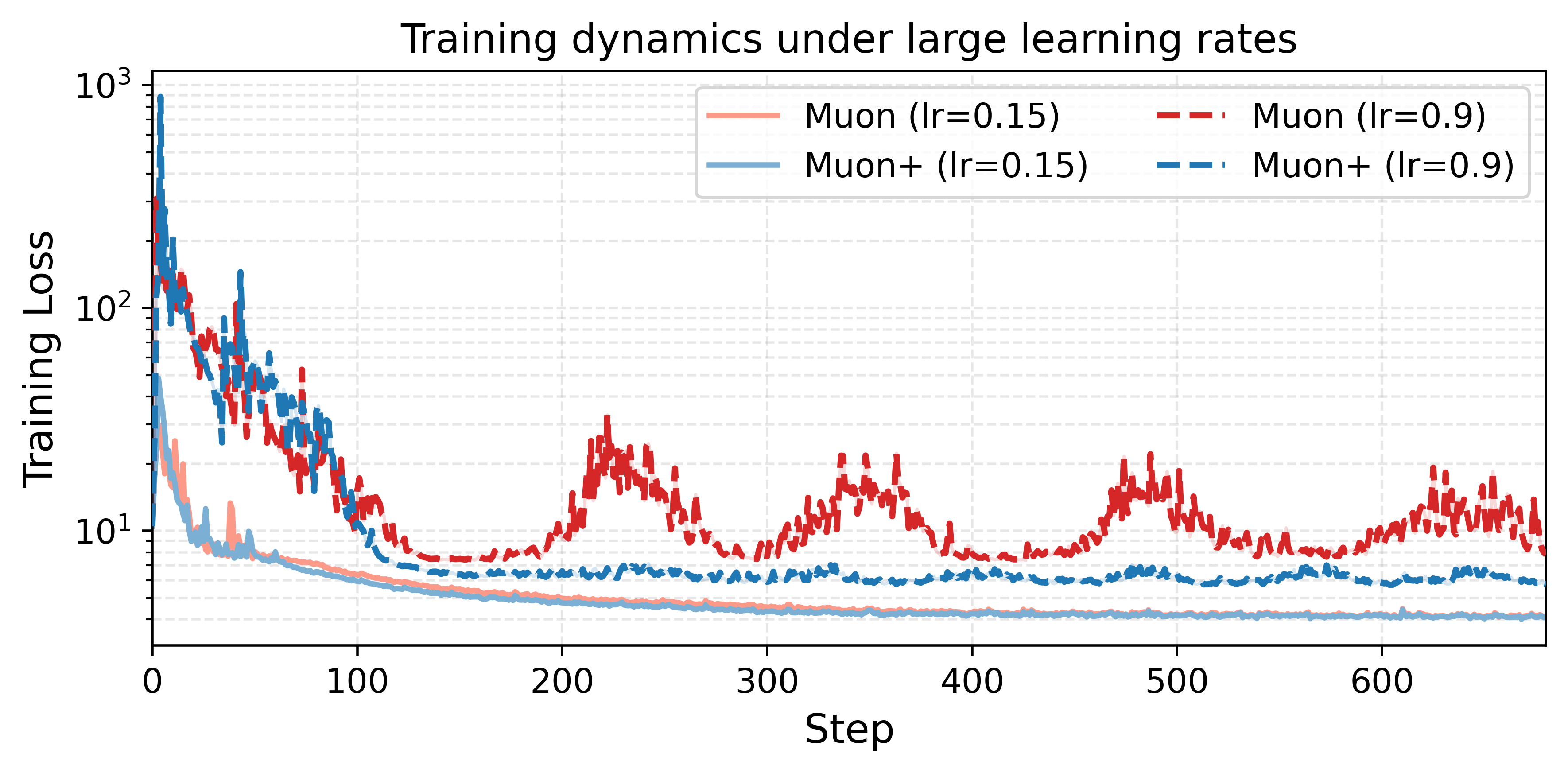}
    \caption{Training behaviors under different learning rates. Muon becomes unstable as the learning rate increases, whereas \textsc{Muon+} remains stable.}
        \label{fig:large_lr}
    \end{subfigure}
    \hfill
    \begin{subfigure}[t]{0.45\linewidth}
        \centering
        \includegraphics[width=\linewidth]{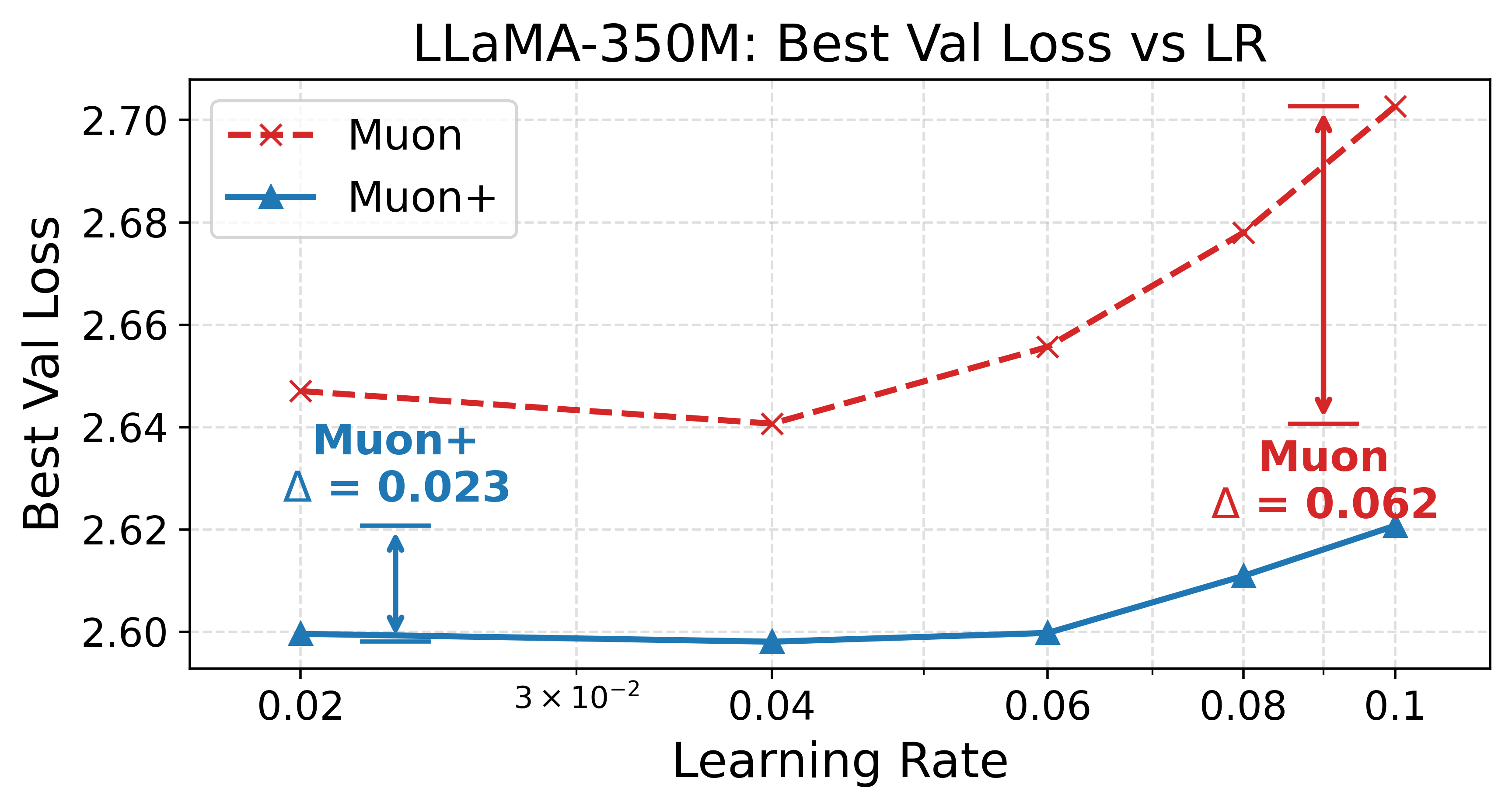}
    \caption{\footnotesize Validation loss of LLaMA 350M. The loss of Muon increases rapidly as the learning rate increases, whereas \textsc{Muon+} remains more stable.}
        \label{fig:llama_val}
    \end{subfigure}
    \caption{Stable step-size comparison between Muon and \textsc{Muon+}. \textsc{Muon+} remains stable under larger learning rates, while Muon exhibits unstable training and degraded validation loss.}
    \label{fig:lr_ablation}
\end{figure}

\subsection{An Easy-to-Deploy Solution: \textsc{Muon+}}
\label{sec:muon+_method}
The above theoretical analysis and experimental demonstration shows that variance imbalance can hurt the performance of Muon. To address this issue, we propose  \textsc{Muon+}, a simple yet highly effective modification by adding a single normalization step after Polar iterations.

\textsc{Muon+} follows the Muon update rule in Eq.~(\ref{eq:muon_update_rule}), while additionally applying a normalization step to the orthogonalized update in order to reduce the imbalance in the update matrix:
\begin{equation}
\begin{aligned}
\mat{M}_t &= \mu \mat{M}_{t-1} + (1 - \mu) \mat{G}_t, \\
\mat{O}_t &= \mathrm{Norm}_{(d)}\!\left(\mathrm{Ortho}(\mat{M}_t)\right), \\
\mat{W}_t &= \mat{W}_{t-1} - \eta \cdot \sqrt{m/n} \cdot \mat{O}_t.
\end{aligned}
\label{eq:muonN_update_rule}
\end{equation}
Here $\mathrm{Norm}_{(d)}(\cdot)$ denotes a normalization operator applied along direction $d$. We provide the pseudocode as in Algorithm~\ref{alg:muon_plus_code} in the Appendix.
We consider column-wise and row-wise normalization, denoted by \(\mathrm{Norm}_{(\mathrm{col})}(\cdot)\) and \(\mathrm{Norm}_{(\mathrm{row})}(\cdot)\), respectively. For \(\mat{X}=[x_{ij}]\in\mathbb{R}^{m\times n}\), define
\[
\mathrm{Norm}_{(\mathrm{col})}(\mat{X}) := \mat{X}\,\mathbf{D}_{\mathrm{col}}^{-1},
\qquad
\mathbf{D}_{\mathrm{col}} := \mathrm{diag}\!\left(
\sqrt{\sum_{i=1}^{m} x_{i1}^{2}},\ldots,\sqrt{\sum_{i=1}^{m} x_{in}^{2}}
\right),
\]
and
\[
\mathrm{Norm}_{(\mathrm{row})}(\mat{X}) := \mathbf{D}_{\mathrm{row}}^{-1}\mat{X},
\qquad
\mathbf{D}_{\mathrm{row}} := \mathrm{diag}\!\left(
\sqrt{\sum_{j=1}^{n} x_{1j}^{2}},\ldots,\sqrt{\sum_{j=1}^{n} x_{mj}^{2}}
\right).
\]
We further consider composed normalizations, such as \(\mathrm{Norm}_{(\mathrm{col\_row})}\) and \(\mathrm{Norm}_{(\mathrm{row\_col})}\), defined by sequential application of the two normalizations.

\section{Experiments}
\label{sec:experiments}

We evaluate \textsc{Muon+} on two widely adopted architectures: GPT and LLaMA.
Our evaluation covers both compute-optimal pre-training and long-horizon overtraining regimes, followed by systematic ablation studies in Section~\ref{sec:ablation}. Note that, all the experiments in this paper use 5 iterations in $\mathrm{Ortho}(\cdot)$ to approximate $\mat{UV}^T$.
\subsection{Pre-training in Compute-Optimal Settings}

\label{exp:GPT_pretraining}
\paragraph{GPT Models.} We first evaluate \textsc{Muon+} on GPT-style models. We pre-train GPT-Small, GPT-Base, and GPT-Large with a compute-optimal~\cite{hoffmann2022training} token-to-parameter (T2P) ratio. All models are trained on the FineWeb dataset~\cite{penedo2024fineweb}, tokenized using the GPT tokenizer, with a vocabulary size of 50,257 and a batch size of 512. Training is conducted on H100/A100 GPUs using mixed precision (bfloat16). Following the setup in~\cite{amsel2025polar}, we apply \textsc{Muon+} (or Muon) to all parameters except embeddings, unembeddings, normalization layers, and positional encodings, which are optimized using AdamW. For the polar operator, we adopt the same configuration as in~\cite{jordan2024muon}. We sweep normalization directions under learning rates in \([0.003, 0.005, 0.01, 0.02, 0.04]\) for both \textsc{Muon+} and Muon except GPT-Huge, and report the best results in Table~\ref{tab:gpt_MuonN_results}. GPT-Huge is trained with a learning rate of {\(0.01\)} using bfloat16 precision (no mixed precision). Detailed hyperparameters and full sweep results are provided in Appendix~\ref{apx:hyperparameter} and Appendix~\ref{apx:sweep}. As shown in Table~\ref{tab:gpt_MuonN_results}, \textsc{Muon+} consistently outperforms Muon across all GPT model scales. 
\begin{table}[h]
\centering
\begin{tabular}{l c c c c}
\toprule
\textbf{Model} & \textbf{Param (M)} & \textbf{Train Tokens (B)} & Muon & \textbf{\textsc{Muon+}} \\
\midrule
GPT-Small & 124 & 3.0  &  29.66& \textbf{27.64}\; \textcolor{green!50!black}  {(-2.02)}\\
GPT-Base  & 362 & 7.2  &  21.70&  \textbf{19.98}\; \textcolor{green!50!black}{(-1.72)}\\
GPT-Large & 774 & 15.5 &  17.82 & \textbf{16.91}\; \textcolor{green!50!black}{(-0.91)}\\
GPT-Huge (bf16) & 6,654 & 20.0 &  15.90 & \textbf{14.69}\; \textcolor{green!50!black}{(-1.21)}\\
\bottomrule
\end{tabular}
\caption{Validation PPL of Muon vs. \textsc{Muon+} on GPT models.}
\label{tab:gpt_MuonN_results}
\end{table}

\paragraph{LLaMA Models.} To extend our evaluation beyond GPT architectures, we benchmark our proposed approach against AdamW and Muon by pre-training LLaMA-based language models. This validation is also conducted on the FineWeb dataset, spanning model capacities from 60M up to 7B parameters (architectural details are provided in Table~\ref{tab:llama_model_configs}).

Based on the compute-optimal scaling guidelines established by \cite{hoffmann2022training}, we strictly pair model sizes with training token budgets: the 60M, 130M, 350M, 1B, and 7B parameter models are trained on 1.1B, 2.2B, 6.4B, 13.1B, and 19.8B tokens, respectively. Across all configurations, we maintain a constant batch size of 512 and employ the LLaMA-2 tokenizer with a 32,000-token vocabulary. In line with the setup described in Section~\ref{exp:GPT_pretraining}, all experiments are executed using mixed precision on H100 and A100 GPUs.

We sweep normalization directions over learning rates in \([0.005, 0.01, 0.02, 0.04, 0.06, 0.08]\) for all models in this section (Section~\ref{sec:ablation_norm_dir}), except LLaMA-1B and LLaMA-7B due to computational constraints. For LLaMA-1B, we sweep only the \textit{col-row} and \textit{row-col} variants. For LLaMA-7B, we use bfloat16 precision and the \textit{row-col} variant with a learning rate of \(0.01\). The results are reported in Table~\ref{tab:llama_pretraining}. Additional hyperparameter details are provided in Appendix~\ref{apx:hyperparameter}. Overall, as shown in Table~\ref{tab:llama_pretraining}, \textsc{Muon+} consistently outperforms the baselines, achieving the best overall performance across all evaluated scales. We also report mean ± standard deviation over 5 random seeds for the GPT-Base and LLaMA-350M models to assess the statistical robustness of MUON+ improvements (see Table~\ref{tab:var_bar}).
\begin{table}[h]
\centering
\setlength{\tabcolsep}{4pt}
\begin{tabular}{l c c c c c}
\toprule
\textbf{Model} & \textbf{Param (M)} & \textbf{Train Tokens (B)} &{AdamW} & Muon & \textbf{\textsc{Muon+}} \\
\midrule
LLaMA-60M  & 58   & 1.1  & 33.10& 25.75 & \textbf{25.25}\; \textcolor{green!50!black}{(-0.50)} \\
LLaMA-130M & 134  & 2.2  & 23.64& 19.06 & \textbf{18.65}\; \textcolor{green!50!black}{(-0.41)} \\
LLaMA-350M & 368  & 6.4  & 16.18& 14.02 & \textbf{13.41}\; \textcolor{green!50!black}{(-0.61)} \\
LLaMA-1B   & 1339 & 13.1 & 14.38& 10.68 &  \textbf{10.31}\; \textcolor{green!50!black}{(-0.37)}\\
LLaMA-7B (bf16)   & 6,738 & 19.8 & -- & 10.54 &  \textbf{10.12}\; \textcolor{green!50!black}{(-0.42)}\\
\bottomrule
\end{tabular}
\caption{Validation PPL of Muon vs. \textsc{Muon+} on LLaMA models.}
\label{tab:llama_pretraining}
\vspace{-15pt}
\end{table}
\subsection{Pre-Training with High Token-to-Parameter Ratios}
\label{sec:overtrain}
\begin{table}[h]
\centering
\begin{tabular}{l c c c c}
\toprule
\textbf{Model} & \textbf{Param (M)} & \textbf{Train Tokens (B)} & Muon & {\bf \textsc{Muon+}} \\
\midrule
GPT-Base   & 362 & 72 & 16.97 & \textbf{15.84}\; \textcolor{green!50!black}{(-1.13)} \\
LLaMA-350M & 368 & 72 & 11.48 & \textbf{11.03}\; \textcolor{green!50!black}{(-0.45)} \\
\bottomrule
\end{tabular}
\caption{Overtraining GPT-Base/LLaMA-350M. Models are trained on 72 billion FineWeb tokens.}
\label{tab:overtrain}
\end{table}

\begin{table}[t]
  \centering
  \vspace{-15pt}
    \begin{tabular}{ccccccccc}
    \toprule
          & Ave. & OBQA & HellaSwag & ARC-E & WSC   & Winogrande & BoolQ & PIQA \\
    \midrule
    Muon  & 0.481 & 0.306 & 0.446 & 0.444 & \textbf{0.375} & 0.522 & 0.574 & 0.700 \\
    \midrule
    \textsc{Muon+} & \textbf{0.494} & \textbf{0.320} & \textbf{0.480} & \textbf{0.471} & 0.365 & \textbf{0.534} & \textbf{0.577} & \textbf{0.712} \\
    \bottomrule
    \end{tabular}%
    \caption{Downstream evaluation results on GPT-Base overtraining checkpoint.}
  \label{tab:gpt_downstram}%
  \vspace{-15pt}
\end{table}%

\begin{figure}[t]
    \centering
        \centering
        \includegraphics[width=\linewidth]{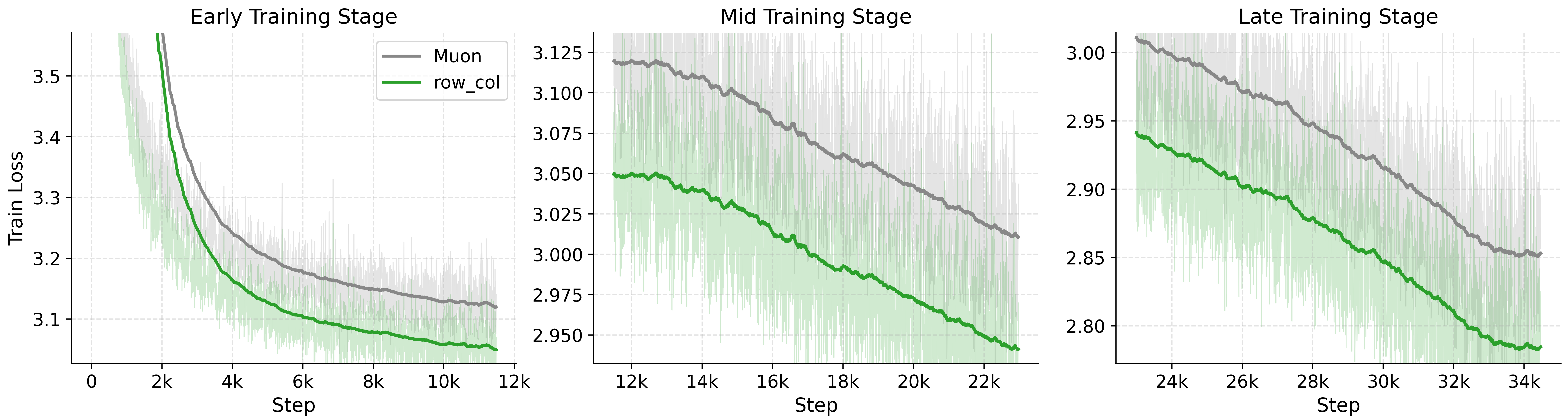}
    \caption{Training loss curves under overtraining for GPT-Base.}
    \vspace{-10pt}
    \label{fig:overtrain_loss}
\end{figure}
We further study \textsc{Muon+} in the overtraining regime using GPT-Base and LLaMA-350M. Both models are trained on 72 billion FineWeb tokens, corresponding to a token-to-parameter ratio of approximately 200. Detailed hyperparameters are provided in Appendix~\ref{apx:hyperparameter}.

Table~\ref{tab:overtrain} shows that \textsc{Muon+} outperforms Muon for both models in the overtraining regime. The improvement remains at a token-to-parameter ratio of approximately 200, indicating that the additional normalization continues to be effective under extended training. Figure~\ref{fig:overtrain_loss} shows that the gap is stable throughout training.

\paragraph{Downstream Evaluations.} Furthermore, we compare \textsc{Muon+} with Muon on downstream tasks using the pretrained models. We evaluate their commonsense reasoning performance, with results reported in Table~\ref{tab:gpt_downstram}. \textsc{Muon+} outperforms Muon on most tasks and achieves higher average accuracy.

\subsection{Training Efficiency}
\label{sec:complexity}

\textsc{Muon+} has nearly the same per-step runtime and memory cost as Muon (see Appendix~\ref{tab:per_step_cost}). In this section, we report the wall-clock time required to reach the same target loss, where the target is set to the final training loss of the Muon baseline. We use the same data scale as in Table~\ref{tab:gpt_MuonN_results} and Table~\ref{tab:llama_pretraining}. The results are shown in Table~\ref{tab:norm_overhead}. \textsc{Muon+} speeds up pre-training by up to \(37.1\%\) in the evaluated settings.

\begin{table}[h]
\centering
\setlength{\tabcolsep}{3pt}
\begin{tabular}{lrrrr}
\toprule
\textbf{Model} & \textbf{Target loss} & \textbf{Muon steps} & \textbf{Muon+ steps} & \textbf{Speed-up} \\
\midrule
LLaMA-130M & 3.060 & 4228 & 3448 & \textcolor{green!50!black}{$\uparrow$22.6\%} \\
LLaMA-350M & 2.788 & 3064 & 2374 & \textcolor{green!50!black}{$\uparrow$29.1\%} \\
\midrule
GPT-Base   & 3.144 & 3447 & 2515 & \textcolor{green!50!black}{$\uparrow$37.1\%} \\
GPT-Large  & 2.927 & 3710 & 3032 & \textcolor{green!50!black}{$\uparrow$22.4\%} \\
\bottomrule
\end{tabular}
\caption{Training cost required to reach a target loss.}
\label{tab:norm_overhead}
\end{table}

\subsection{Ablation Study}
\label{sec:ablation}
%\zz{Please organize the ablation study as one sub-section in the result section.}\rj{}
To better understand the source of performance gains, 
we conduct systematic ablations on the key design choices of \textsc{Muon+}.
In particular, we analyze the effects normalization directions
and orthogonalization methods while keeping all other training settings fixed.

\paragraph{Impact of Normalization Directions.}
\label{sec:ablation_norm_dir}

\begin{table}[H]
\centering
\setlength{\tabcolsep}{6pt}
\begin{tabular}{lccccc}
\toprule
Model & \textbf{None} & \textbf{Col} & \textbf{Row} & \textbf{Col-Row} & \textbf{Row-Col} \\
\midrule
LLaMA-60M  & 25.75 & 25.34 & 25.29 & \textbf{25.25} & \textbf{25.25} \\
LLaMA-130M & 19.34 & 19.16 & 18.98 & \textbf{18.65} & 18.68 \\
LLaMA-350M & 14.02 & 13.73 & 13.46 & \textbf{13.41} & 13.44 \\
\bottomrule
\end{tabular}
\caption{Best validation perplexity under different normalization directions.
For each model, we report the best result across learning rates.
Lower is better.}
\label{tab:norm_best}
\vspace{-15pt}
\end{table}

We study the impact of different normalization directions, including \texttt{none}(Muon Baseline), \texttt{col}, \texttt{row}, \texttt{col\_row}, and \texttt{row\_col}, across multiple model scales, see Appendix~\ref{apx:sweep} for more quantitative results. As shown in Table~\ref{tab:norm_best}, introducing normalization leads to better optimization behavior compared to the baseline Muon across all evaluated settings. This improvement maintains as the model scale increases. Importantly, applying bi-directional normalization consistently outperforms single-directional normalization.

\paragraph{Ablation for Polar Methods \textbf{$\mathrm{Ortho}(\cdot)$}.} To validate the robustness of \textsc{Muon+} under different polar functions, we adopt 3 different methods (all with 5 iterations) in this section: You \cite{cesistasqueezing}, Jordan\cite{jordan2024muon} and the more recent PolarExpress\cite{amsel2025polar}. Hyperparameters are identical as in Table~\ref{tab:hparams_llama}. As shown in Table~\ref{tab:polar_ablation}, \textsc{Muon+} consistently outperforms Muon across all evaluated methods. As discussed in Section~\ref{sec:amp}, one direction may remain unbalanced for a non-square matrix even when $\mat{U}\mat{V}^\top$ is computed exactly via SVD. Therefore, applying \textsc{Muon+} on top of this exact orthogonal update can still yield additional gains. This is experimentally shown in Table~\ref{tab:svd_ablation}.

\begin{table}[H]
\centering
\small

\begin{minipage}[t]{0.48\linewidth}
\centering
\footnotesize
\setlength{\tabcolsep}{5pt}
\begin{tabular}{c c c c}
\toprule
\textbf{Model} & $\mathrm{Ortho}(\cdot)$  & \textbf{Muon} & \textbf{\textsc{Muon+}} \\
\midrule
\multirow{3}{*}{LLaMA-350M}
& You & 14.01 & \textbf{13.38}\textcolor{green!50!black}{(-0.63)} \\
& Jordan & 14.02 & \textbf{13.41}\textcolor{green!50!black}{(-0.61)} \\
& PolarExpress & 13.90 & \textbf{13.27}\textcolor{green!50!black}{(-0.63)} \\
\bottomrule
\end{tabular}
\caption{Validation perplexity comparison among Muon, and \textsc{Muon+}(\texttt{col\_row}) on LLaMA-350M; all methods use 5 iterations.}
\label{tab:polar_ablation}
\end{minipage}
\hfill
\begin{minipage}[t]{0.48\linewidth}
\centering
\footnotesize
\setlength{\tabcolsep}{5pt}
\begin{tabular}{c c c c}
\toprule
\textbf{Model} & $\mathrm{Ortho}(\cdot)$ & \textbf{Muon} & \textbf{\textsc{Muon+}} \\
\midrule
GPT-Small & SVD & 29.17 & \textbf{27.90}\textcolor{green!50!black}{(-1.27)} \\
\bottomrule
\end{tabular}
\caption{\textsc{Muon+} and Muon with exact SVD. Learning rate is 0.005 for all runs. \textsc{Muon+} uses \texttt{col\_row} as its normalization direction.}
\label{tab:svd_ablation}
\end{minipage}

\end{table}

\section{Related Work and Comparison with \textsc{Muon+}}
\label{sec:related_work_comparison}

%We also compare our method with prior works~\cite{boissin2025turbo,li2025normuon,si2025adamuon}. For all experiments involving Muon variants, we sweep the learning rate from 0.003 to 0.04, except for AdaMuon, for which we sweep from 0.001 to 0.005. The results are reported in Table~\ref{tab:comparison-results}. \textsc{Muon+} achieves the best overall performance. Notably, unlike NorMuon~\cite{li2025normuon} and AdaMuon~\cite{si2025adamuon}, \textsc{Muon+} does not require storing any additional optimizer states, incurs no memory overhead.

Muon~\citep{jordan2024muon} has shown promising performance in large-scale pre-training. Follow-up work can be grouped into three directions.

\begin{itemize}[leftmargin=*]
    \item {\bf Optimizing the polar function (complementary to Muon+).}
    Several methods improve Muon by designing better polar iterations. You~\cite{cesistasqueezing} learns per-iteration coefficients, \textsc{PolarExpress}~\cite{amsel2025polar} derives minimax-optimal Newton--Schulz coefficients, and \textsc{Turbo-Muon}~\cite{boissin2025turbo} manipulates the spectrum for faster convergence of the polar iteration. Because Muon+ does not change anything inside the polar functions, these methods are complementary to \textsc{Muon+}. As shown in Table~\ref{tab:polar_ablation}, combining \textsc{Muon+} with these methods always yields additional performance improvements.

    \item {\bf Pre-conditioning techniques (potential competitors with Muon+).}
    Some recent techniques improve Muon with additional preconditioning. {NorMuon}~\cite{li2025normuon} adds a per-neuron second moment, {AdaMuon}~\cite{si2025adamuon} combines Muon with an Adam-style element-wise step, and {Mano}~\cite{gu2026mano} projects updates onto the oblique manifold. These methods are closer competitors to \textsc{Muon+}, as they also aim to improve the update geometry. As shown in Table~\ref{tab:muon_comparison_results}, \textsc{Muon+} consistently outperforms these competing baselines.\footnote{For all experiments in this comparison except AdaMuon, we swept the learning rate over \([0.005, 0.01, 0.02, 0.04, 0.06, 0.08]\). For AdaMuon, which requires smaller learning rates, we swept \([0.005, 0.003, 0.001]\). We report the best result for each method.}. Notably, unlike NorMuon and AdaMuon, \textsc{Muon+} does not require additional optimizer states.
    \item {\bf Theoretical foundations of Muon (relevant but not competing with Muon+).}
    Existing theoretical studies mainly explain Muon from spectral or non-Euclidean optimization views. \cite{bernstein2024modular} frames Muon as steepest descent under the spectral norm; \cite{kovalev2025understanding} analyzes Muon through a non-Euclidean trust-region formulation; \cite{su2025isotropic} models Muon using an isotropic curvature approximation. These results do not capture the properties of the parameter-space. We provide an alternative perspective: this work explains why and how Muon can amplify the parameter-space imbalance (Lemma~\ref{lemma:var_diff}, Corollary~\ref{cor:positive_shift_exists}) and why such imbalance provably hurts convergence (Theorem~\ref{thm:imbalance_bad_informal}). Our analysis also helps explain why preconditioning methods like NorMuon and AdaMuon can be effective, as their preconditioning mechanisms implicitly reduce the parameter-space imbalance induced by Muon (see Figure~\ref{fig:muon_variants_var}).
\end{itemize}
\begin{table}[h]
\centering
\setlength{\tabcolsep}{3pt}
\begin{tabular}{lcc}
\toprule
\textbf{Method} & \textbf{GPT-Small} & \textbf{GPT-Base} \\
\midrule
\textbf{Muon+} & \textbf{27.64} & \textbf{19.98} \\
Turbo-Muon & 29.69 & 21.91 \\
NorMuon    & 28.44 & 21.31 \\
AdaMuon    & 29.27 & 22.38 \\
\bottomrule
\end{tabular}
\caption{\small Validation PPL ($\downarrow$) across different Muon methods.}
\label{tab:muon_comparison_results}
\vspace{-10pt}
\end{table}
\vspace{-10pt}
\section{Conclusion}
In this work, we have shown that polar iterations can amplify the imbalance in the \emph{parameter space} of the update in Muon. Through a blockwise descent analysis, we have shown that such an imbalance reduces Muon's largest stable step size. To address this issue, we have proposed \textsc{Muon+}, a simple yet effective modification that adds a single normalization step after orthogonalization. Across GPT and LLaMA pre-training from 60M to 7B parameters, \textsc{Muon+} has consistently improved pre-training performance with zero memory overhead and significant overall training speedup.
\clearpage
\bibliography{ref}
\bibliographystyle{abbrv}
\clearpage
\appendix
\section{Blockwise Descent Analysis for Muon}
\label{apx:imbalance_convergence}
This appendix follows the blockwise descent framework of~\cite{tomihari2025understanding}. To align with the notation of prior work, we identify the matrix parameter $\mat W\in\mathbb R^{m\times n}$ with a vectorization
\[
\theta:=\mathrm{vec}(\mat W)\in\mathbb R^{P},
\qquad
P=mn,
\qquad
\|\theta\|_2=\|\mat W\|_F.
\]
Throughout this appendix, we fix an arbitrary partition of the index set $\{1,\dots,P\}$ into $B$ non-overlapping blocks
\[
\{1,\dots,P\}
=
I_1\sqcup I_2\sqcup\cdots\sqcup I_B,
\qquad
P_b:=|I_b|.
\]
For any $x\in\mathbb R^{P}$ we write $[x]_b:=x\big|_{I_b}\in\mathbb R^{P_b}$, so $\|x\|_2^2=\sum_{b=1}^B \|[x]_b\|_2^2$. Two canonical choices (the ones used in Section~\ref{sec:amp}) are:
\begin{itemize}
\item \emph{Column blocks}: $B=n$, $P_b=m$, and $[\theta]_b=\mat W_{:,b}$ (i.e., $\theta$ is the column-stacking vectorization and each block is a column of $\mat W$);
\item \emph{Row blocks}: $B=m$, $P_b=n$, and $[\theta]_b=\mat W_{b,:}^\top$ (i.e., $\theta$ is the row-stacking vectorization and each block is a row of $\mat W$).
\end{itemize}

\subsection{Block-Hessian and weighted curvature}
\label{apx:vec_convention}

For each $b\in\{1,\dots,B\}$, let $[\nabla^2 L(\theta)]_b\in\mathbb R^{P_b\times P_b}$ denote the $b$-th principal submatrix of the Hessian under the chosen partition, and define the block-diagonal part of the Hessian by
\[
\nabla^2 L_D(\theta)
:=
\mathrm{blockdiag}\!\bigl(\{[\nabla^2 L(\theta)]_b\}_{b=1}^B\bigr).
\]

For the Muon-style orthogonalized update
\[
\mat O_t \;:=\; \mathrm{Ortho}(\nabla L(\mat W_t)),
\qquad
o_t \;:=\; \mathrm{vec}(\mat O_t)\in\mathbb R^{P},
\]
we define the update-weighted block curvature at iterate $\theta_t$ by
\begin{equation}
\label{eq:lambda_O_block_appendix}
\widetilde\Lambda_{O,t}
:=
\sum_{b=1}^{B}
\bigl\|[\nabla^2 L(\theta_t)]_b\bigr\|_2
\,\|[o_t]_b\|_2^2.
\end{equation}
Since $\mat O_t$ is semi-orthogonal (if $\nabla L(\mat W_t)=\mat U\mat\Sigma\mat V^\top$, then $\mat O_t=\mat U\mat V^\top$), we have
\[
\|o_t\|_2^2 \;=\; \|\mat O_t\|_F^2 \;=\; \min(m,n) \;=:\; r.
\]

\subsection{Local assumptions}
\label{apx:local_assumptions}

We use the following local analogues of Assumptions~4.2 and~4.3 of~\cite{tomihari2025understanding}.

\begin{assumption}[Local Hessian Lipschitz]
\label{asm:local_lipschitz}
There exist a neighborhood $\mathcal N_t\subseteq\mathbb R^{P}$ of $\theta_t$, containing the line segment
\[
\{\theta_t+s(\theta_{t+1}-\theta_t):\,s\in[0,1]\},
\]
and a constant $\rho_H>0$ such that
\[
\|\nabla^2 L(\theta)-\nabla^2 L(\theta')\|_2
\le
\rho_H\|\theta-\theta'\|_2
\qquad
\text{for all }\theta,\theta'\in\mathcal N_t.
\]
\end{assumption}

\begin{assumption}[Local near block-diagonal Hessian]
\label{asm:local_blockdiag}
With respect to the chosen block partition $\{I_b\}_{b=1}^B$, there exists $\delta_D\ge 0$ such that, for all $\theta\in\mathcal N_t$,
\[
\|\nabla^2 L(\theta)-\nabla^2 L_D(\theta)\|_2
\le
\delta_D.
\]
\end{assumption}

\subsection{Technical lemma}
\label{apx:lemma}
\begin{lemma}
\label{lem:taylor_hl}
Under Assumption~\ref{asm:local_lipschitz}, for any $\theta,\theta'\in\mathcal N_t$,
\begin{equation}
\label{eq:taylor_hl}
L(\theta')
\le
L(\theta)
+
\nabla L(\theta)^\top(\theta'-\theta)
+
\frac12(\theta'-\theta)^\top \nabla^2 L(\theta)(\theta'-\theta)
+
\frac{\rho_H}{6}\|\theta'-\theta\|_2^3.
\end{equation}
\end{lemma}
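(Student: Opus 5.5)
The plan is to use Taylor's theorem with integral remainder along the segment from $\theta$ to $\theta'$, and then control the deviation of the Hessian from its value at $\theta$ using Assumption~\ref{asm:local_lipschitz}. Set $d:=\theta'-\theta$ and $h(s):=L(\theta+sd)$ for $s\in[0,1]$. I will need the segment $\{\theta+sd\}$ to lie in $\mathcal N_t$. This holds automatically when $\mathcal N_t$ is convex, for example a ball around $\theta_t$, and I will take the neighborhood to be convex as the natural reading of the assumption. For the application, $\theta=\theta_t$ and $\theta'=\theta_{t+1}$, and the required segment is exactly the one the assumption posits. Under that proviso, $L$ is twice continuously differentiable along the segment, and the second-order Taylor formula with integral remainder gives
\[
h(1)=h(0)+h'(0)+\int_0^1(1-s)\,h''(s)\,ds ,
\qquad h''(s)=d^\top\nabla^2L(\theta+sd)\,d .
\]

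Next I split $h''(s)=d^\top\nabla^2L(\theta)d+d^\top\bigl(\nabla^2L(\theta+sd)-\nabla^2L(\theta)\bigr)d$. The first piece is constant in $s$. Since $\int_0^1(1-s)\,ds=\tfrac12$, it produces the quadratic term $\tfrac12 d^\top\nabla^2L(\theta)d$. The second piece is bounded in absolute value by $\|\nabla^2L(\theta+sd)-\nabla^2L(\theta)\|_2\|d\|_2^2\le\rho_H s\|d\|_2^3$, using the operator-norm inequality and then the Lipschitz assumption. Integrating this bound gives $\rho_H\|d\|_2^3\int_0^1(1-s)s\,ds=\tfrac{\rho_H}{6}\|d\|_2^3$, which is exactly the cubic term in \eqref{eq:taylor_hl}.

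The computation itself is routine; the main obstacle is regularity. The Lipschitz Hessian assumption presupposes that $\nabla^2L$ exists on $\mathcal N_t$, and Lipschitz continuity makes it continuous, so $h\in C^2$ and the integral-remainder form is valid. If one prefers to avoid integrals, the same bound follows by applying the mean value theorem twice to $h$. The only other point to check is the convexity/segment issue noted in the first paragraph.
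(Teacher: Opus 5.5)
Your argument is correct and is essentially the paper's proof. The paper obtains the remainder as the iterated integral $\int_0^1\int_0^s(\theta'-\theta)^\top\nabla^2L(\nu(r))(\theta'-\theta)\,dr\,ds$ by applying the fundamental theorem of calculus twice, which is your $\int_0^1(1-s)h''(s)\,ds$ after exchanging the order of integration, and its constant $\int_0^1\int_0^s r\,dr\,ds=\tfrac16$ matches your $\int_0^1(1-s)s\,ds$. Your point that the segment must lie in $\mathcal N_t$ (guaranteed by convexity, or by the assumption itself for the pair $\theta_t,\theta_{t+1}$) makes explicit something the paper uses without comment.

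One caution about your aside: the mean value theorem applied to $h$ itself gives a Lagrange remainder $\tfrac12h''(\xi)$ with unknown $\xi\in(0,1)$, and hence only the constant $\rho_H/2$. To get $\rho_H/6$ without integrals, apply it twice to $\psi(s)=h(s)-h(0)-sh'(0)-\tfrac{s^2}{2}h''(0)-\tfrac{\rho_H}{6}\|\theta'-\theta\|_2^3s^3$, which satisfies $\psi(0)=\psi'(0)=0$ and $\psi''\le0$.
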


\begin{proof}
Define
\[
\nu(s):=\theta+s(\theta'-\theta),\qquad s\in[0,1].
\]
By the fundamental theorem of calculus,
\[
L(\theta')-L(\theta)
=
\int_0^1 \nabla L(\nu(s))^\top(\theta'-\theta)\,ds.
\]
Also,
\[
\nabla L(\nu(s))-\nabla L(\theta)
=
\int_0^s \nabla^2 L(\nu(r))(\theta'-\theta)\,dr.
\]
Substituting this identity yields
\[
L(\theta')-L(\theta)
=
\nabla L(\theta)^\top(\theta'-\theta)
+
\int_0^1\int_0^s
(\theta'-\theta)^\top \nabla^2 L(\nu(r))(\theta'-\theta)\,dr\,ds.
\]
Add and subtract $\nabla^2 L(\theta)$ inside the quadratic form. The main term becomes
\[
\int_0^1\int_0^s
(\theta'-\theta)^\top \nabla^2 L(\theta)(\theta'-\theta)\,dr\,ds
=
\frac12(\theta'-\theta)^\top \nabla^2 L(\theta)(\theta'-\theta).
\]
For the remainder, Assumption~\ref{asm:local_lipschitz} gives
\[
\|\nabla^2 L(\nu(r))-\nabla^2 L(\theta)\|_2
\le
\rho_H\|\nu(r)-\theta\|_2
=
\rho_H r\|\theta'-\theta\|_2.
\]
Hence
\[
\bigl|
(\theta'-\theta)^\top
(\nabla^2 L(\nu(r))-\nabla^2 L(\theta))
(\theta'-\theta)
\bigr|
\le
\rho_H r\|\theta'-\theta\|_2^3.
\]
Integrating over $0\le r\le s\le 1$ gives
\[
\int_0^1\int_0^s \rho_H r\|\theta'-\theta\|_2^3\,dr\,ds
=
\frac{\rho_H}{6}\|\theta'-\theta\|_2^3.
\]
Combining the terms proves~\eqref{eq:taylor_hl}.
\end{proof}

\subsection{One-step descent bound: formal statement and proof}
\label{apx:onestep}
\begin{theorem}[Theorem~\ref{thm:imbalance_bad_informal} is restated]
\label{thm:formal_onestep}
Under Assumptions~\ref{asm:local_lipschitz} and~\ref{asm:local_blockdiag}, the Muon-style orthogonalized update
\[
\theta_{t+1}=\theta_t-\eta_t\,o_t,
\qquad
o_t := \mathrm{vec}\bigl(\mathrm{Ortho}(\nabla L(\mat W_t))\bigr),
\qquad
\eta_t>0,
\]
satisfies
\begin{equation}
\label{eq:onestep_descent}
L(\theta_{t+1})
\le
L(\theta_t)
-\eta_t\|\nabla L(\mat W_t)\|_{*}
+
\frac{\eta_t^2}{2}\bigl(\widetilde\Lambda_{O,t}+\delta_D\,\|o_t\|_2^2\bigr)
+
\frac{\eta_t^3\rho_H}{6}\|o_t\|_2^3,
\end{equation}
where $\|\cdot\|_{*}$ denotes the nuclear norm and $\|o_t\|_2^2=\|\mat O_t\|_F^2=\min(m,n)=r$.
\end{theorem}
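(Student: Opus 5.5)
We apply Lemma~\ref{lem:taylor_hl} with $\theta=\theta_t$ and $\theta'=\theta_{t+1}=\theta_t-\eta_t o_t$. Both points lie in $\mathcal N_t$ by Assumption~\ref{asm:local_lipschitz}, which also puts the whole segment between them in $\mathcal N_t$. This gives
\[
L(\theta_{t+1})\le L(\theta_t)-\eta_t\nabla L(\theta_t)^\top o_t+\frac{\eta_t^2}{2}\,o_t^\top\nabla^2 L(\theta_t)\,o_t+\frac{\rho_H}{6}\eta_t^3\|o_t\|_2^3 .
\]
The cubic term already has the required form. The remaining work is to identify the linear term and to bound the quadratic term.

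\textbf{Linear term.} Vectorization preserves the Frobenius inner product, so $\nabla L(\theta_t)^\top o_t=\langle \nabla L(\mat W_t),\mat U\mat V^\top\rangle_F$, where $\nabla L(\mat W_t)=\mat U\mat\Sigma\mat V^\top$ is a compact SVD. Then
\[
\mathrm{tr}(\mat V\mat\Sigma\mat U^\top\mat U\mat V^\top)=\mathrm{tr}(\mat\Sigma)=\|\nabla L(\mat W_t)\|_*,
\]
so the linear contribution is exactly $-\eta_t\|\nabla L(\mat W_t)\|_*$. If the gradient is rank-deficient, we take the compact SVD, so that $\mathrm{Ortho}$ is evaluated on the nonzero singular directions, and the identity still holds. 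In that case one should note that $\|o_t\|_2^2$ equals the rank. This does not affect the inequality, except where the paper identifies $\|o_t\|_2^2$ with $r$ for a full-rank gradient.

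\textbf{Quadratic term.} Split the Hessian as $\nabla^2L(\theta_t)=\nabla^2L_D(\theta_t)+\bigl(\nabla^2L(\theta_t)-\nabla^2L_D(\theta_t)\bigr)$. For the off-block-diagonal part, Assumption~\ref{asm:local_blockdiag} gives $o_t^\top(\cdot)o_t\le\delta_D\|o_t\|_2^2$. The block-diagonal part decouples over blocks:
\[
o_t^\top\nabla^2L_D(\theta_t)\,o_t=\sum_b [o_t]_b^\top[\nabla^2L(\theta_t)]_b[o_t]_b\le\sum_b\|[\nabla^2L(\theta_t)]_b\|_2\,\|[o_t]_b\|_2^2=\widetilde\Lambda_{O,t}.
\]
This uses the Rayleigh quotient bound for each symmetric principal submatrix. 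Combining the pieces yields \eqref{eq:onestep_descent}, and substituting $\|o_t\|_2^2=\|\mat O_t\|_F^2=\min(m,n)$ gives the stated form.

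No step here is a real obstacle; the argument is a direct assembly. The only points needing care are the nuclear-norm identity for the linear term, including the rank-deficient convention, and checking that the informal Theorem~\ref{thm:imbalance_bad_informal} absorbs the term $\delta_D\|o_t\|_2^2$ into the stated curvature term, or equivalently treats it as negligible when $\delta_D$ is small.
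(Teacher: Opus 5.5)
Your proposal is correct and follows the paper's proof essentially step for step: the Hessian-Lipschitz Taylor bound of Lemma~\ref{lem:taylor_hl}, the trace identity $\langle \nabla L(\mat W_t),\mat U\mat V^\top\rangle_F=\mathrm{tr}(\mat\Sigma)=\|\nabla L(\mat W_t)\|_*$ for the linear term, and a split of the Hessian into its block-diagonal part (bounded blockwise by $\|[\nabla^2L(\theta_t)]_b\|_2\,\|[o_t]_b\|_2^2$) plus a $\delta_D\|o_t\|_2^2$ remainder. Your rank-deficient remark is a fair refinement of the final identification $\|o_t\|_2^2=\min(m,n)$: with your compact-SVD convention that identification needs a full-rank gradient, whereas the paper's thin SVD keeps zero singular values. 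It does not affect the inequality itself.
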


\textbf{Remark.} For simplicity, Theorem~\ref{thm:imbalance_bad_informal} is stated under the assumption $\delta_D \to 0$, which does not affect the conclusion.

\begin{proof}
Let
\[
g_t:=\nabla L(\theta_t),
\qquad
\mat O_t := \mathrm{Ortho}(\nabla L(\mat W_t)),
\qquad
o_t := \mathrm{vec}(\mat O_t).
\]
Applying Lemma~\ref{lem:taylor_hl} with $\theta=\theta_t$ and $\theta'=\theta_{t+1}=\theta_t-\eta_t\, o_t$, we obtain
\begin{equation}
\label{eq:muon_taylor_raw}
L(\theta_{t+1})
\le
L(\theta_t)
-\eta_t\, g_t^\top o_t
+
\frac{\eta_t^2}{2}\, o_t^\top \nabla^2 L(\theta_t)\, o_t
+
\frac{\eta_t^3\rho_H}{6}\|o_t\|_2^3.
\end{equation}
We handle the first- and second-order terms separately.

\medskip
\noindent\textit{Step 1: First-order term.} Write the (thin) SVD of $\nabla L(\mat W_t)$ as
\[
\nabla L(\mat W_t) = \mat U\mat\Sigma\mat V^\top,
\]
so that $\mat O_t = \mat U\mat V^\top$. Since $g_t^\top o_t$ is the Euclidean inner product of the vectorizations of $\nabla L(\mat W_t)$ and $\mat O_t$ under a common vectorization convention, it equals the matrix Frobenius inner product, so
\[
g_t^\top o_t
=
\bigl\langle \nabla L(\mat W_t),\,\mat O_t\bigr\rangle_F
=
\mathrm{tr}\bigl(\mat V\mat\Sigma^\top\mat U^\top\mat U\mat V^\top\bigr)
=
\mathrm{tr}(\mat\Sigma)
=
\|\nabla L(\mat W_t)\|_{*},
\]
\medskip
\noindent\textit{Step 2: Second-order term.} By Assumption~\ref{asm:local_blockdiag},
\[
\|\nabla^2 L(\theta_t)-\nabla^2 L_D(\theta_t)\|_2\le \delta_D.
\]
Therefore,
\[
o_t^\top \nabla^2 L(\theta_t) o_t
=
o_t^\top \nabla^2 L_D(\theta_t) o_t
+
o_t^\top\bigl(\nabla^2 L(\theta_t)-\nabla^2 L_D(\theta_t)\bigr)o_t,
\]
and the second term is bounded by
\[
\bigl|
o_t^\top\bigl(\nabla^2 L(\theta_t)-\nabla^2 L_D(\theta_t)\bigr)o_t
\bigr|
\le
\delta_D\|o_t\|_2^2.
\]
Hence
\begin{equation}
\label{eq:quad_split}
o_t^\top \nabla^2 L(\theta_t) o_t
\le
o_t^\top \nabla^2 L_D(\theta_t) o_t
+
\delta_D\|o_t\|_2^2.
\end{equation}

Since $\nabla^2 L_D(\theta_t)$ is block-diagonal with respect to the chosen partition, its quadratic form decomposes blockwise:
\[
o_t^\top \nabla^2 L_D(\theta_t) o_t
=
\sum_{b=1}^B
[o_t]_b^\top [\nabla^2 L(\theta_t)]_b [o_t]_b.
\]
For each block, the operator norm gives
\[
[o_t]_b^\top [\nabla^2 L(\theta_t)]_b [o_t]_b
\le
\bigl\|[\nabla^2 L(\theta_t)]_b\bigr\|_2\,\|[o_t]_b\|_2^2.
\]
Summing over $b$, we obtain
\[
o_t^\top \nabla^2 L_D(\theta_t) o_t
\le
\sum_{b=1}^{B}
\bigl\|[\nabla^2 L(\theta_t)]_b\bigr\|_2
\,\|[o_t]_b\|_2^2
=
\widetilde\Lambda_{O,t}.
\]
Combining this with~\eqref{eq:quad_split} gives
\[
o_t^\top \nabla^2 L(\theta_t) o_t
\le
\widetilde\Lambda_{O,t}+\delta_D\,\|o_t\|_2^2.
\]
Substituting the first-order identity from Step~1 and this bound into~\eqref{eq:muon_taylor_raw} proves~\eqref{eq:onestep_descent}.
\end{proof}
Using $\|o_t\|_2=\|\mat O_t\|_F$, Theorem~\ref{thm:formal_onestep} is equivalently the matrix statement
\[
L(\mat W_{t+1})
\le
L(\mat W_t)
-\eta_t\|\nabla L(\mat W_t)\|_{*}
+
\frac{\eta_t^2}{2}\bigl(\widetilde\Lambda_{O,t}+\delta_D\,\|\mat O_t\|_F^2\bigr)
+
\frac{\eta_t^3\rho_H}{6}\|\mat O_t\|_F^3,
\]
where $\mat O_t=\mathrm{Ortho}(\nabla L(\mat W_t))$ and $\|\mat O_t\|_F^2=\min(m,n)$. Instantiating the partition $\{I_b\}_{b=1}^B$ as either the columns or the rows of $\mat W_t$ recovers the column- and row-block versions used in Section~\ref{sec:amp}: the principal Hessian blocks $[\nabla^2 L(\theta_t)]_b$ and the absolute squared block norms $\|[o_t]_b\|_2^2$ in~\eqref{eq:lambda_O_block_appendix} then specialize to column- or row-indexed quantities, respectively, while $\delta_D$ is the near-block-diagonal slack of the Hessian under the chosen partition.
\clearpage

\subsection{Positive Correlation Before and After the Polar Step}
\label{apx:update-hessian}
\begin{figure}
    \centering
    \includegraphics[width=0.8\linewidth]{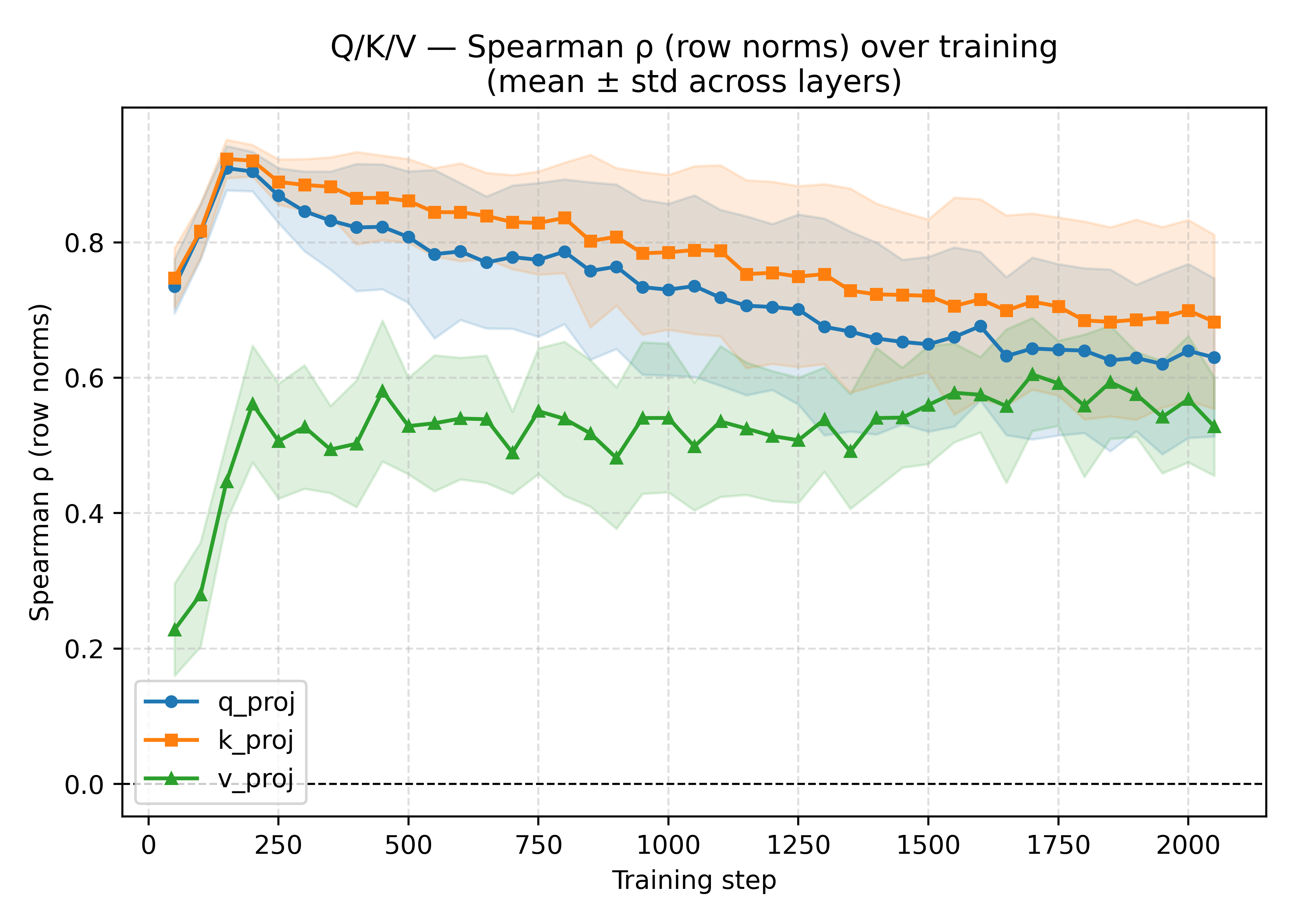}
    \caption{Spearman rank correlation between pre-polar and post-polar (using 5 NS iterations) row norms over training (mean $\pm$ standard deviation across layers). All correlations remain positive throughout training, indicating that rows that are large before the polar step tend to remain large afterward.}
    \label{fig:qkv_spearman}
\end{figure}

Prior work has identified a positive \emph{gradient--Hessian correlation}: parameter blocks with larger gradient norms tend to be associated with larger diagonal Hessian blocks~\cite{zhang2024transformers,jiang2023does,zhang2019gradient,crawshaw2022robustness,tomihari2025understanding}. Motivated by this observation, we empirically examine whether the polar step preserves the \emph{relative blockwise magnitudes} of the pre-polar update. If it does, then the blocks that are large before the polar step will still tend to remain large afterward.

To quantify this effect, for each matrix updated by Muon, let
\[
g_j := \|[\mat G]_{j,:}\|_2,
\qquad
q_j := \|[\mat Q]_{j,:}\|_2,
\]
denote the row norms of the pre-polar input $\mat G$ and the post-polar output $\mat Q$, respectively. We then compute the Spearman rank correlation~\cite{spearman1961proof} between the two sequences:
\begin{equation}
\label{eq:spearman_def}
\rho_{\mathrm{sp}}(g,q)
:=
\mathrm{Corr}\!\bigl(\mathrm{rank}(g_1,\dots,g_n),\,
\mathrm{rank}(q_1,\dots,q_n)\bigr).
\end{equation}
A positive value of $\rho_{\mathrm{sp}}$ indicates that rows that are relatively large before the polar step also tend to remain relatively large afterward; a larger value indicates stronger preservation of the blockwise ordering.

Figure~\ref{fig:qkv_spearman} reports the Spearman correlation over training, using row norms and averaging across layers. We observe that the correlations remain consistently positive throughout training. Moreover, the correlations are generally substantial. This shows that the polar step does not destroy the relative blockwise ordering of the pre-polar update.
\clearpage

\section{Existence of Positive Expected Variance Shift}
\label{apx:amp_proof}
\subsection{Proof of Lemma~\ref{lemma:var_diff}}

\begin{proof}
We divide the proof into four steps. Throughout, let $\lambda_k := \sigma_k^2$ and $g(\lambda) := \varphi(\sqrt{\lambda})^2 = \lambda(a + b\lambda + c\lambda^2)^2$.

\medskip
\noindent\textit{Step 1: Row energies in terms of $\ \mat{U}$ and the spectrum.}

Since $\mat{M} = \mat{U}\mat{\Sigma}\mat{V}^\top$ and $\mat{Q} = \mat{U}\,\varphi(\mat{\Sigma})\,\mat{V}^\top$, the row energies $s_i(\cdot) = \|\cdot_{i:}\|_2^2$ are
\[
  s_i(\mat{M}) = \sum_{k=1}^{n} \lambda_k\, u_{ik}^2, \qquad
  s_i(\mat{Q}) = \sum_{k=1}^{n} g(\lambda_k)\, u_{ik}^2.
\]

\medskip
\noindent\textit{Step 2: Expected variance under Haar measure.}

We compute $\mathbb{E}[\mathrm{Var}(s(\mat{M}))]$ and $\mathbb{E}[\mathrm{Var}(s(\mat{Q}))]$ simultaneously; both have the form $s_i = \sum_k \alpha_k\, u_{ik}^2$ with $\alpha_k = \lambda_k$ or $\alpha_k = g(\lambda_k)$, respectively. Assume $\mat{U}$ is Haar-distributed on $O(n)$, each row $(u_{i1},\dots,u_{in})$ is uniform on $S^{n-1}$. The relevant spherical moments are
\[
  \mathbb{E}[u_{ik}^2] = \frac{1}{n}, \qquad
  \mathbb{E}[u_{ik}^4] = \frac{3}{n(n+2)}, \qquad
  \mathbb{E}[u_{ik}^2 u_{i\ell}^2] = \frac{1}{n(n+2)} \;\; (k \neq \ell),
\]
which follow from exchangeability of coordinates and the constraint $\sum_k u_{ik}^2 = 1$.

The first moment of the row energy is $\mathbb{E}[s_i] = \frac{1}{n}\sum_k \alpha_k$, independent of $i$. For the second moment,
\begin{align*}
  \mathbb{E}[s_i^2]
  &= \sum_{k} \alpha_k^2\,\mathbb{E}[u_{ik}^4] + \sum_{k \neq \ell} \alpha_k \alpha_\ell\,\mathbb{E}[u_{ik}^2 u_{i\ell}^2] \\
  &= \frac{3}{n(n+2)} \sum_{k} \alpha_k^2 + \frac{1}{n(n+2)} \sum_{k \neq \ell} \alpha_k \alpha_\ell \\
  &= \frac{1}{n(n+2)} \Bigl( 2\sum_{k} \alpha_k^2 + \Bigl(\sum_{k} \alpha_k\Bigr)^{\!2} \Bigr),
\end{align*}
where we used $3\sum_k \alpha_k^2 + \sum_{k\neq \ell} \alpha_k\alpha_\ell = 2\sum_k \alpha_k^2 + (\sum_k \alpha_k)^2$. Note that $\mathbb{E}[s_i^2]$ is also independent of $i$ by exchangeability of coordinates under Haar measure. Moreover, the total row energy is deterministic: by orthogonality of $\mat{U}$,
\[
  \sum_{i=1}^{n} s_i = \sum_{k=1}^{n} \alpha_k \sum_{i=1}^{n} u_{ik}^2 = \sum_{k=1}^{n} \alpha_k,
\]
which does not depend on $\mat{U}$. Therefore $\bar{s} := \frac{1}{n}\sum_i s_i$ is a deterministic constant, and $\mathbb{E}[\bar{s}^{\,2}] = \bar{s}^{\,2} = \bigl(\mathbb{E}[s_i]\bigr)^2$. It follows that
\begin{align*}
  \mathbb{E}[\mathrm{Var}(s)]
  &= \mathbb{E}[s_i^2] - \bigl(\mathbb{E}[s_i]\bigr)^2
  = \frac{1}{n(n+2)} \Bigl( 2\sum_{k} \alpha_k^2 + \Bigl(\sum_k \alpha_k\Bigr)^{\!2} \Bigr) - \frac{1}{n^2}\Bigl(\sum_k \alpha_k\Bigr)^{\!2} \\
  &= \frac{2}{n^2(n+2)} \Bigl( n\sum_{k} \alpha_k^2 - \Bigl(\sum_k \alpha_k\Bigr)^{\!2} \Bigr).
\end{align*}
Setting $\alpha_k = g(\lambda_k)$ and $\alpha_k = \lambda_k$ and taking the difference,
\begin{equation}\label{eq:var_shift_raw}
  \mathbb{E}\bigl[\mathrm{Var}(s(\mat{Q})) - \mathrm{Var}(s(\mat{M}))\bigr]
  = \frac{2}{n^2(n+2)} \Bigl[
    n\sum_{k} g(\lambda_k)^2 - \Bigl(\sum_{k} g(\lambda_k)\Bigr)^{\!2}
    - n\sum_{k} \lambda_k^2 + \Bigl(\sum_{k} \lambda_k\Bigr)^{\!2}
  \Bigr].
\end{equation}

\medskip
\noindent\textit{Step 3: Reformulation via pairwise differences.}

Applying the identity $\;n\sum_{k} x_k^2 - (\sum_{k} x_k)^{2} = \sum_{i<j} (x_i - x_j)^2\;$ separately to $x_k = g(\lambda_k)$ and $x_k = \lambda_k$, we rewrite \eqref{eq:var_shift_raw} as
\begin{equation}\label{eq:var_shift_pairwise}
  \mathbb{E}\bigl[\mathrm{Var}(s(\mat{Q})) - \mathrm{Var}(s(\mat{M}))\bigr]
  = \frac{2}{n^2(n+2)} \sum_{1 \le i < j \le n}
    \bigl[(g(\lambda_i) - g(\lambda_j))^2 - (\lambda_i - \lambda_j)^2\bigr].
\end{equation}

\medskip
\noindent\textit{Step 4: Applying the mean value theorem.}

For each pair with $\lambda_i \neq \lambda_j$, the mean value theorem gives a point $\xi_{ij} \in \bigl(\min\{\lambda_i,\lambda_j\},\, \max\{\lambda_i,\lambda_j\}\bigr)$ such that
\[
  g(\lambda_i) - g(\lambda_j) = g'(\xi_{ij})\,(\lambda_i - \lambda_j).
\]
Squaring and substituting into \eqref{eq:var_shift_pairwise},
\[
  (g(\lambda_i) - g(\lambda_j))^2 - (\lambda_i - \lambda_j)^2
  = \bigl(g'(\xi_{ij})^2 - 1\bigr)(\lambda_i - \lambda_j)^2.
\]
When $\lambda_i = \lambda_j$ both sides vanish trivially. Therefore
\[
  \mathbb{E}\bigl[\mathrm{Var}(s(\mat{Q})) - \mathrm{Var}(s(\mat{M}))\bigr]
  = \frac{2}{n^2(n+2)} \sum_{1 \le i < j \le n}
    (\lambda_i - \lambda_j)^2 \bigl(g'(\xi_{ij})^2 - 1\bigr),
\]
which completes the proof.
\end{proof}

\subsection{Proof of Corollary~\ref{cor:positive_shift_exists}}
\begin{proof}
We divide the proof into three steps.

\textit{Step 1: \(a>1\) for Newton-Schulz iteration in Muon}

In Muon, the input matrix is first Frobenius-normalized. Hence its singular values satisfy
\[
0\le \sigma_k \le 1,
\qquad k=1,\dots,n,
\]
and therefore
\[
0\le \lambda_k=\sigma_k^2 \le 1.
\]
Thus the relevant regime for the Newton--Schulz map is the interval \([0,1]\). 
Moreover, in Muon the Newton--Schulz step is designed to move singular values toward the target value \(1\), namely,
\begin{equation}
\label{eq:push_up}
\varphi(x)>x
\qquad
\text{for all }x\in(0,1).
\end{equation}

\medskip
\noindent
Since
\[
\varphi(x)=x(a+bx^2+cx^4)=ax+bx^3+cx^5,
\]
we have
\[
\varphi(x)-x=(a-1)x+bx^3+cx^5
= x\bigl((a-1)+bx^2+cx^4\bigr).
\]
Assume for contradiction that \(a\le 1\).

If \(a<1\), then
\[
(a-1)+bx^2+cx^4 < 0
\]
for all sufficiently small \(x>0\), because the constant term \(a-1\) is strictly negative and the higher-order terms vanish as \(x\to0\). Hence
\[
\varphi(x)-x<0
\]
for all sufficiently small \(x>0\), contradicting \eqref{eq:push_up}.
Without appealing to any specific choice of Newton--Schulz coefficients, we assume:
\[
\varphi'(0)=a>1.
\]

\medskip
\noindent
\textit{Step 2: \(a>1\) implies that \(|g'(\lambda)|>1\) on a neighborhood of \(0\).}

By definition,
\[
g(\lambda)=\lambda(a+b\lambda+c\lambda^2)^2.
\]
Differentiating gives
\[
g'(\lambda)
=
(a+b\lambda+c\lambda^2)^2
+
2\lambda(a+b\lambda+c\lambda^2)(b+2c\lambda).
\]
In particular,
\[
g'(0)=a^2.
\]
Since \(a>1\), we get
\[
g'(0)=a^2>1.
\]
Because \(g'\) is continuous, there exists \(\delta>0\) such that
\[
g'(\lambda)^2>1
\qquad
\text{for all }\lambda\in[0,\delta].
\]

\medskip
\noindent
\textit{Step 3: Positivity of the variance shift.}

By Lemma~\ref{lemma:var_diff}, for each pair \(1\le i<j\le n\), there exists
\[
\xi_{ij}\in(\min\{\lambda_i,\lambda_j\},\,\max\{\lambda_i,\lambda_j\})
\]
such that
\[
\mathbb E\!\left[\Var(s(\mat Q))-\Var(s(\mat M))\right]
=
\frac{2}{n^2(n+2)}
\sum_{1\le i<j\le n}
(\lambda_i-\lambda_j)^2\bigl(g'(\xi_{ij})^2-1\bigr).
\]
Now suppose that
\[
\lambda_1,\dots,\lambda_n\in[0,\delta].
\]
Then for every \(i<j\),
\[
\xi_{ij}\in[0,\delta],
\]
and therefore
\[
g'(\xi_{ij})^2-1>0.
\]
Hence each summand
\[
(\lambda_i-\lambda_j)^2\bigl(g'(\xi_{ij})^2-1\bigr)
\]
is nonnegative. Since the \(\lambda_k\) are not all equal, there exists at least one pair \(i<j\) such that
\[
(\lambda_i-\lambda_j)^2>0,
\]
and for that pair the corresponding summand is strictly positive. Therefore the whole sum is strictly positive, and since
\[
\frac{2}{n^2(n+2)}>0,
\]
we conclude that
\[
\mathbb E\!\left[\Var(s(\mat Q))-\Var(s(\mat M))\right]>0.
\]

This proves the claim.
\end{proof}

\clearpage
\section{Pseudocode of Muon+}
\begin{algorithm}[htb]
\caption{Python code for the \textsc{Muon+} update.}
\label{alg:muon_plus_code}
\begin{lstlisting}[language=Python]
def muon_plus_step(W, M_prev, G, mu, lr, d="col", eps=1e-8):
    # momentum
    M = mu * M_prev + (1.0 - mu) * G
    # orthogonalize
    U = Ortho(M)  # newton-schulz
    # normalize
    O = norm_dir(U, d=d, eps=eps)
    # update
    m, n = W.shape[-2], W.shape[-1]
    W = W - lr * (m / n) ** 0.5 * O
    return W, M
def norm_dir(X, d="col", eps=1e-8):
    if d == "col":
        denom = (X.square().sum(dim=-2, keepdim=True) + eps).sqrt()
        return X / denom
    if d == "row":
        denom = (X.square().sum(dim=-1, keepdim=True) + eps).sqrt()
        return X / denom
    if d == "col_row":
        return norm_dir(norm_dir(X, "col", eps), "row", eps)
    if d == "row_col":
        return norm_dir(norm_dir(X, "row", eps), "col", eps)
\end{lstlisting}
\end{algorithm}

\clearpage
\newpage
\section{Reigions of Imbalance Amplification and Reduction}
\label{apx:more_coeff}
\subsection[Jordan Coefficients]{Jordan Coefficients~\cite{jordan2024muon}}
In~\cite{jordan2024muon}, the coefficients are set to
\[
(a,b,c)=(3.4445,\,-4.7750,\;2.0315).
\]
The resulting regions are as Figure~\ref{fig:jordan_shift}:
\begin{figure}[H]
    \centering
    \includegraphics[width=0.5\linewidth]{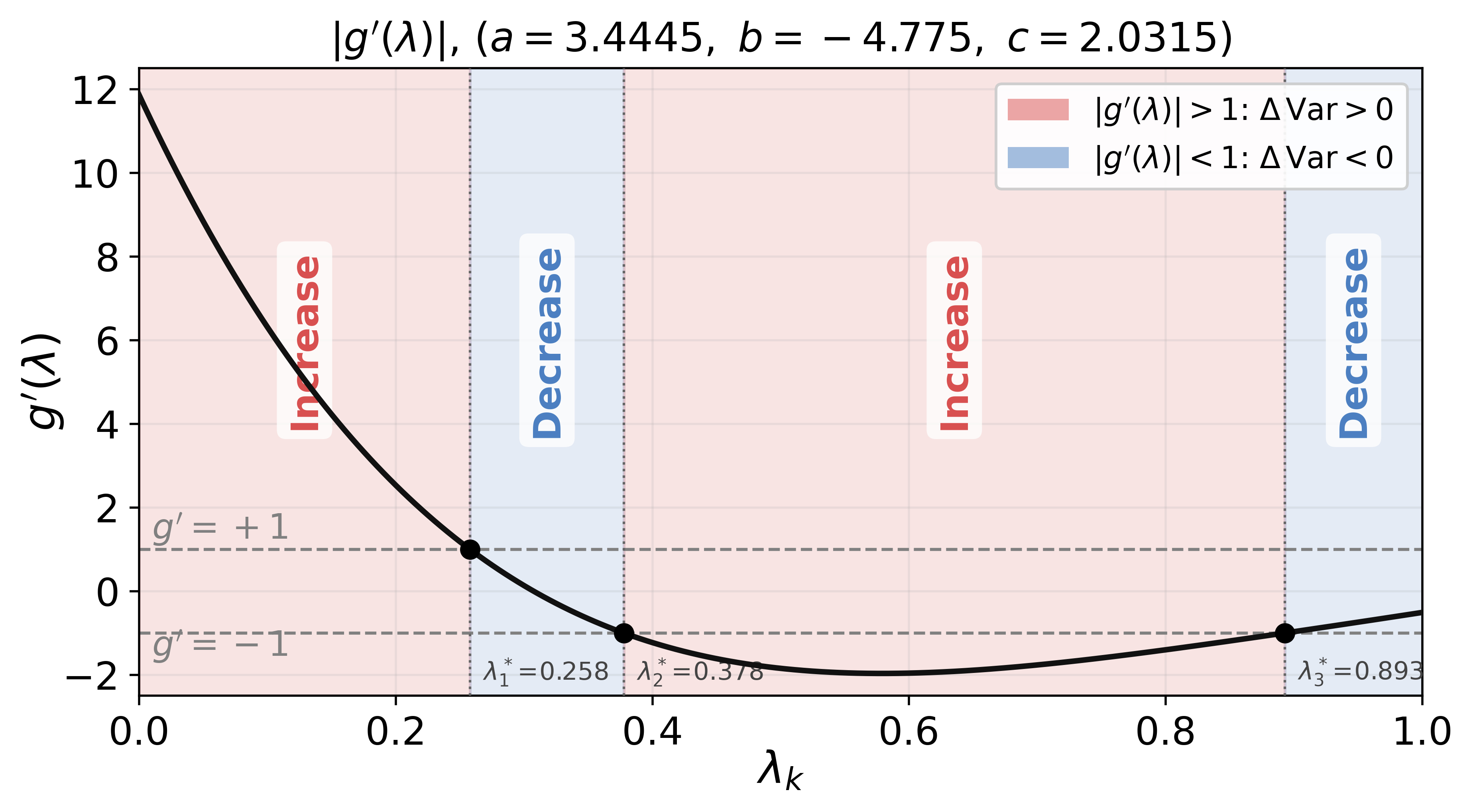}
    \caption{Regions of imbalance amplification and reduction for Jordan Coefficients.}
    \label{fig:jordan_shift}
\end{figure}

\subsection[You Coefficients]{You Coefficients~\cite{cesistasqueezing}}
You~\cite{cesistasqueezing} further optimizes the coefficients at each iteration:
\[
\begin{aligned}
\mathrm{coeff}=\big[
&(3955/1024,\,-8306/1024,\,5008/1024),\\
&(3735/1024,\,-6681/1024,\,3463/1024),\\
&(3799/1024,\,-6499/1024,\,3211/1024),\\
&(4019/1024,\,-6385/1024,\,2906/1024),\\
&(2677/1024,\,-3029/1024,\,1162/1024),\\
&(2172/1024,\,-1833/1024,\,682/1024)
\big].
\end{aligned}
\]
Then the regions are as follows per step:
\begin{figure}[H]
    \centering
    \includegraphics[width=\linewidth]{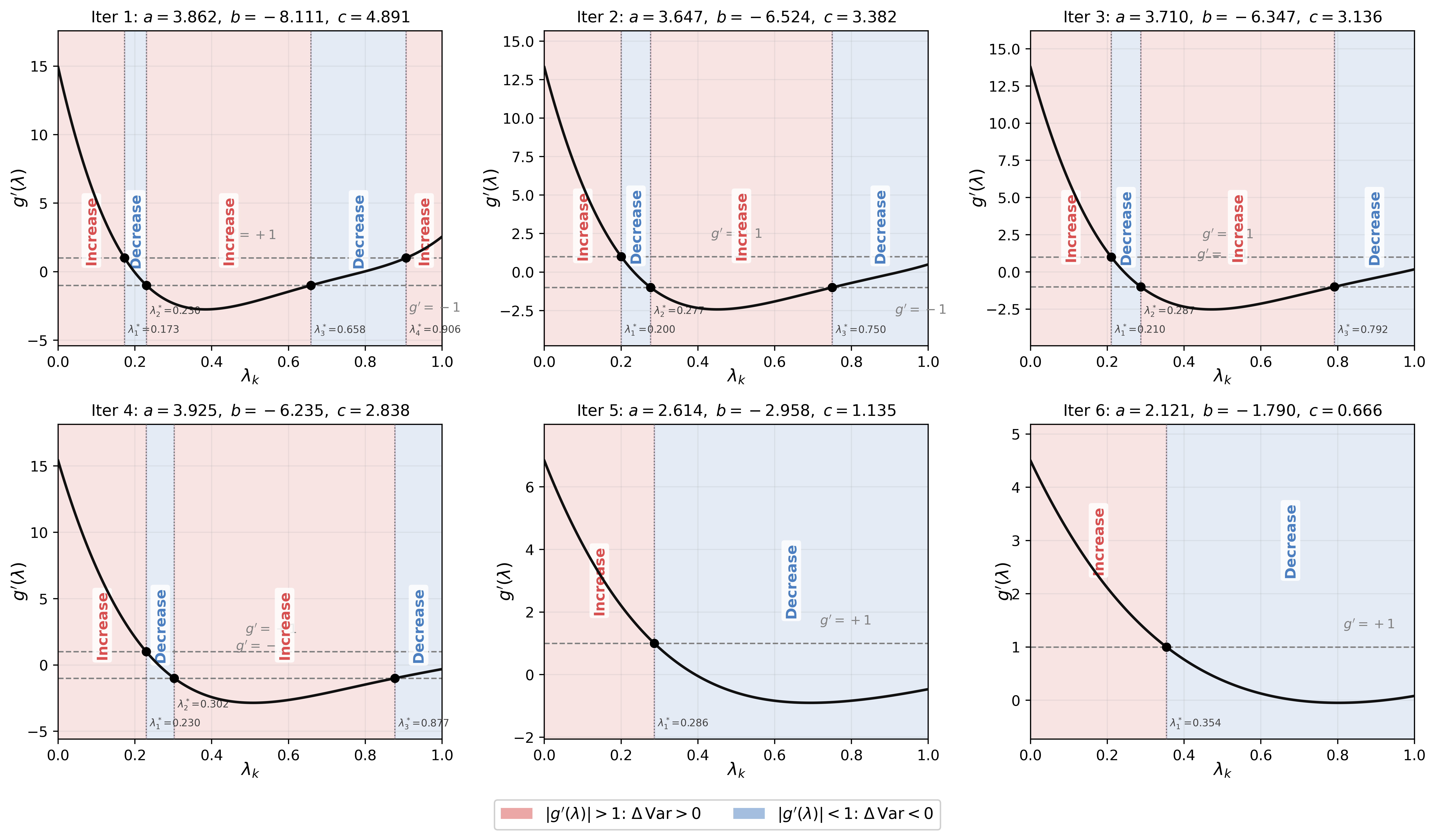}
    \caption{Regions of imbalance amplification and reduction for You Coefficients.}
    \label{fig:you_shift}
\end{figure}

\subsection[PolarExpress Coefficients]{PolarExpress Coefficients~\cite{amsel2025polar}}
\cite{amsel2025polar} uses iteration-dependent coefficients:
\[
\begin{aligned}
\{(a_t,b_t,c_t)\}_{t=1}^8=\big[
&(8.28721201814563,\,-23.595886519098837,\,17.300387312530933),\\
&(4.107059111542203,\,-2.9478499167379106,\,0.5448431082926601),\\
&(3.9486908534822946,\,-2.908902115962949,\,0.5518191394370137),\\
&(3.3184196573706015,\,-2.488488024314874,\,0.51004894012372),\\
&(2.300652019954817,\,-1.6689039845747493,\,0.4188073119525673),\\
&(1.891301407787398,\,-1.2679958271945868,\,0.37680408948524835),\\
&(1.8750014808534479,\,-1.2500016453999487,\,0.3750001645474248),\\
&(1.875,\,-1.25,\,0.375)
\big],
\end{aligned}
\]
with subsequent coefficients numerically equal to \((1.875,\,-1.25,\,0.375)\).
\begin{figure}[H]
    \centering
    \includegraphics[width=\linewidth]{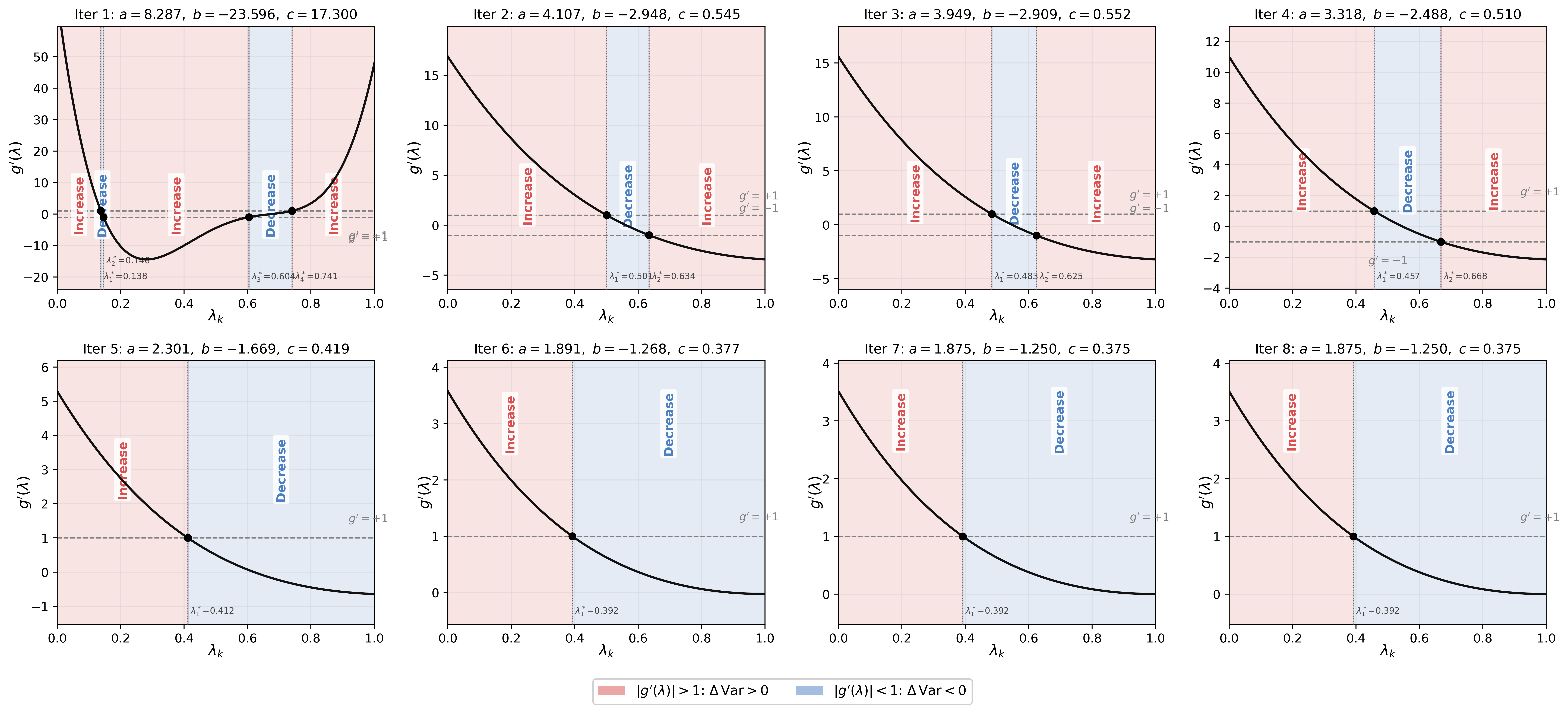}
    \caption{Regions of imbalance amplification and reduction for PolarExpress Coefficients.}
    \label{fig:PolarExpress_shift}
\end{figure}
\clearpage
\section{Hyperparameter}
\label{apx:hyperparameter}
\subsection{Model Configurations}
\begin{table}[h]
\centering
\begin{tabular}{lcccc}
\toprule
\textbf{Model} & $n_{\text{embd}}$ & $n_{\text{layer}}$ & $n_{\text{head}}$ & Param(M) \\
\midrule
GPT-Small & 768  & 12 & 12 & 124 \\
GPT-Base & 1024 & 24 & 16 & 362 \\
GPT-Large & 1280 & 36 & 20 & 774\\
GPT-Huge & 4096 & 32 & 32 & 6654 \\
\bottomrule
\end{tabular}
\caption{Architecture configurations of GPT models.}
\label{tab:gpt_model_configs}
\end{table}

\begin{table}[h]
\centering
\begin{tabular}{lccccc}
\toprule
\textbf{Model} & $n_{\text{embd}}$ & $n_{\text{layer}}$ & $n_{\text{head}}$ & FFN dim & Param(M) \\
\midrule
60M  & 512  & 8  & 8  & 1376 & 58 \\
130M & 768  & 12 & 12 & 2048 & 134 \\
350M & 1024 & 24 & 16 & 2736 & 368 \\
1B   & 2048 & 24 & 32 & 5461 & 1339 \\
7B   & 4096 & 32 & 32 & 11008 & 6738 \\
\bottomrule
\end{tabular}
\caption{Architecture configurations of LLaMA-style models.}
\label{tab:llama_model_configs}
\end{table}

\subsection{Training Configurations}
\subsubsection{GPT Models}
\begin{table}[htb]
\centering
\begin{tabular}{l lccc}
\toprule
\textbf{Model} & \textbf{Hyperparameter} & Muon & \textsc{Muon+} \\
\midrule

\multirow{4}{*}{GPT-Small}
& Sequence length  & 2048 & 2048 \\
& Learning rate    & $0.005$ & $0.01$ \\
& Weight decay     & $0.1$ & $0.1$ \\
& Norm$_d$         & $-$ & Col-Row \\

\midrule

\multirow{4}{*}{GPT-Base}
& Sequence length  & 4096 & 4096 \\
& Learning rate    & $0.005$ & $0.005$ \\
& Weight decay     & $0.1$ & $0.1$ \\
& Norm$_d$         & $-$ & Row \\

\midrule

\multirow{4}{*}{GPT-Large}
& Sequence length  & 8192 & 8192 \\
& Learning rate    & $0.02$ & $0.01$ \\
& Weight decay     & $0.1$ & $0.1$ \\
& Norm$_d$         & $-$ & Row-Col \\

\midrule

\multirow{4}{*}{GPT-Huge}
& Sequence length  & 1024 & 1024 \\
& Learning rate    & $0.01$ & $0.01$ \\
& Weight decay     & $0.005$ & $0.005$ \\
& Norm$_d$         & $-$ & Row-Col \\

\bottomrule
\end{tabular}
\caption{Best training hyperparameters for GPT-Small/Base/Large/Huge on FineWeb. Sequence lengths are set to 2048/4096/8192 for Small/Base/Large, respectively. We use Jordan orthogonalization for all runs. We keep the same learning rate scheduler as in NanoGPT: a constant learning rate for the first 40\% of training steps followed by a linear decay to zero. Sweeping results are provided in Appendix~\ref{apx:sweep}.}
\label{tab:hparams_gpt_all}
\end{table}

\begin{table}[h]
\centering
\begin{tabular}{l lccc}
\toprule
\textbf{Model} & \textbf{Hyperparameter} & Muon & \textsc{Muon+} \\
\midrule

\multirow{3}{*}{GPT-Base}
& Learning rate    & $0.005$ & $0.005$ \\
& Weight decay     & $0.1$ & $0.1$ \\
& Norm$_d$         & $-$ & Row-Col \\

\bottomrule
\end{tabular}
\caption{Training hyperparameters for overtraining GPT-Base. We train 72 billion FineWeb tokens for each setting. Sequence length is 4096 for both runs.}
\label{tab:hparams_gpt_base_overtrain}
\end{table}

\clearpage

\subsubsection{LLaMA Models}

\begin{table}[htb]
\centering
\begin{tabular}{l lccc}
\toprule
\textbf{Model} & \textbf{Hyperparameter} & Muon & \textsc{Muon+} \\
\midrule

\multirow{6}{*}{LLaMA-60M}
& Sequence length  & 1024 & 1024 \\
& Learning rate    & $0.06$ & $0.06$ \\
& LR scheduler     & Cosine & Cosine \\
& Weight decay     & $0.1$ & $0.1$ \\
& Warmup ratio     & $0.1$ & $0.1$ \\
& Norm$_d$         & $-$ & Row-Col \\

\midrule

\multirow{6}{*}{LLaMA-130M}
& Sequence length  & 1024 & 1024 \\
& Learning rate    & $0.02$ & $0.02$ \\
& LR scheduler     & Cosine & Cosine \\
& Weight decay     & $0.1$ & $0.1$ \\
& Warmup ratio     & $0.1$ & $0.1$ \\
& Norm$_d$         & $-$ & Col-Row \\

\midrule

\multirow{6}{*}{LLaMA-350M}
& Sequence length  & 4096 & 4096 \\
& Learning rate    & $0.04$ & $0.04$ \\
& LR scheduler     & Cosine & Cosine \\
& Weight decay     & $0.1$ & $0.1$ \\
& Warmup ratio     & $0.1$ & $0.1$ \\
& Norm$_d$         & $-$ & Col-Row \\

\midrule

\multirow{6}{*}{LLaMA-1B}
& Sequence length  & 4096 & 4096 \\
& Learning rate    & $0.02$ & $0.02$ \\
& LR scheduler     & Cosine & Cosine \\
& Weight decay     & $0.1$ & $0.1$ \\
& Warmup ratio     & $0.1$ & $0.1$ \\
& Norm$_d$         & $-$ & Row-Col \\

\midrule

\multirow{6}{*}{LLaMA-7B}
& Sequence length  & 1024 & 1024 \\
& Learning rate    & $0.01$ & $0.01$ \\
& LR scheduler     & Cosine & Cosine \\
& Weight decay     & $0.005$ & $0.005$ \\
& Warmup ratio     & $0.1$ & $0.1$ \\
& Norm$_d$         & $-$ & Row-Col \\

\bottomrule
\end{tabular}
\caption{Best training hyperparameters for LLaMA on FineWeb. Sequence lengths are set to 1024 for 60M/130M and 4096 for 350M/1B. All runs use Jordan orthogonalization. $\mathrm{Norm}_{(\mathrm{col\_row})}$ and $\mathrm{Norm}_{(\mathrm{row\_col})}$ yield nearly identical performance. Additional sweeping results are provided in Appendix~\ref{apx:sweep}.}
\label{tab:hparams_llama}
\end{table}
\begin{table}[h]
\centering
\begin{tabular}{l lccc}
\toprule
\textbf{Model} & \textbf{Hyperparameter} & Muon & \textsc{Muon+} \\
\midrule

\multirow{3}{*}{LLaMA-350M}
& Learning rate    & $0.04$ & $0.04$ \\
& Weight decay     & $0.1$ & $0.1$ \\
& Norm$_d$         & $-$ & Col-Row \\

\bottomrule
\end{tabular}
\caption{Training hyperparameters for overtraining LLaMA-350M. We train 72 billion FineWeb tokens for each setting. Sequence length is 4096 for both runs.}
\label{tab:hparams_llama_base_overtrain}
\end{table}

%%%%%%%%%%%%%%%%%
\clearpage
\newpage
\section{Detailed Sweep Experiments}
\label{apx:sweep}

\begin{table*}[h]
  \centering
  \setlength{\tabcolsep}{5pt}
  \begin{tabular}{lccccc}
    \toprule
    LR & \textbf{None} & \textbf{Col} & \textbf{Row} & \textbf{Col-Row} & \textbf{Row-Col} \\
    \midrule
    \multicolumn{6}{l}{\textit{LLaMA-60M}} \\
    \midrule
    0.005 & 28.98 & 28.17 & 28.14 &28.10  & \textbf{28.03} \\
    0.01  & 27.03 & 26.59 & 26.42 &26.46  & \textbf{26.41} \\
    0.02  & 26.22 & 25.83 & 25.69 &\textbf{25.65}  & 25.68 \\
    0.04  & 25.90 & 25.49 & 25.30 &25.28  & \textbf{25.28} \\
    0.06  & 25.75 & 25.34 & 25.29 &25.25  & \textbf{25.25} \\
    0.08  & 25.97 & 25.39 & 25.41 &25.29  & \textbf{25.28} \\
    \midrule
    \multicolumn{6}{l}{\textit{LLaMA-130M}} \\
    \midrule
    0.005 & 20.40 & 20.05 & 19.89 & \textbf{19.86} & 19.87 \\
    0.01  & 19.34 & 19.16 & \textbf{18.98} & 18.98 & 19.01 \\
    0.02  & 19.06 & 18.85 & 18.67 & \textbf{18.65} & 18.68 \\
    0.04  & 19.35 & 19.00 & 18.92 & \textbf{18.87} & 18.87 \\
    0.06  & 19.70 & 19.31 & 19.25 & 19.21 & \textbf{19.20} \\
    0.08  & 20.14 & 19.63 & 19.61 & 19.56 & \textbf{19.54} \\
    \midrule
    \multicolumn{6}{l}{\textit{LLaMA-350M}} \\
    \midrule
    0.005 & 15.54 & 14.97 & 14.94 & 14.92 & \textbf{14.86} \\
    0.01  & 14.48 & 14.18 & 14.00 & 13.96 & \textbf{13.95} \\
    0.02  & 14.11 & 13.73 & 13.46 & \textbf{13.43} & 13.46 \\
    0.04  & 14.02 & 13.61 & 13.44 & \textbf{13.41} & 13.44 \\
    0.06  & 14.23 & 13.65 & 13.48 & 13.50 & \textbf{13.46} \\
    0.08  & 14.56 & 13.76 & \textbf{13.60} & 13.60 & 13.61 \\
    \midrule
    \multicolumn{6}{l}{\textit{LLaMA-1B}} \\
    \midrule
    0.005 & 11.47 & - & - & 10.95 & \textbf{10.92} \\
    0.01  & 10.88 & - & - & 10.50 & \textbf{10.48} \\
    0.02  & 10.68 & - & - & 10.32 & \textbf{10.31} \\
    0.04  & 10.74 & - & - & \textbf{10.42} & 10.43 \\
    0.06  & 10.91 & - & - & \textbf{10.57} & 10.61 \\
    0.08  & 11.15 & - & - & \textbf{10.77} & 10.78 \\
    \bottomrule
  \end{tabular}
  \caption{Best validation perplexity per norm setting
           and learning rate for all LLaMA models.
           Bold marks the best entry in each row.
           \textbf{None} denotes the Muon baseline.}
  \label{tab:llama_all_norm_lr}
\end{table*}

\begin{table}[h]
  \centering
  \setlength{\tabcolsep}{6pt}
  \begin{tabular}{lccccc}
    \toprule
    LR & \textbf{None} & \textbf{Col} & \textbf{Row} & \textbf{Col-Row} & \textbf{Row-Col} \\
    \midrule
    \multicolumn{6}{l}{\textit{GPT-Small}} \\
    \midrule
    0.003 & 31.10 & 29.75 & \textbf{28.37} & 28.39 & 29.01 \\
    0.005 & 29.66 & 28.81 & 27.97 & \textbf{27.91} & 28.09 \\
    0.01  & 29.87 & 28.08 & 27.69 & \textbf{27.64} & 27.76 \\
    0.02  & 30.13 & 29.03 & 29.27 & 28.56 & \textbf{27.72} \\
    0.04  & 30.14 & 29.71 & 30.25 & 30.58 & \textbf{29.37} \\
    \midrule
    \multicolumn{6}{l}{\textit{GPT-Base}} \\
    \midrule
    0.003 & 22.10 & 21.37 & 20.22 & \textbf{20.20} & 20.75 \\
    0.005 & 21.70 & 20.63 & \textbf{19.98} & 20.03 & 20.11 \\
    0.01  & 21.89 & 20.73 & 20.93 & 20.74 & \textbf{20.35} \\
    0.02  & 21.75 & 20.83 & 21.15 & 21.10 & \textbf{20.27} \\
    0.04  & {22.14} & 23.27 & 23.16 & 23.41 & \textbf{22.12} \\
    \midrule
    \multicolumn{6}{l}{\textit{GPT-Large}} \\
    \midrule
    0.005 & 18.26  & - & - &  -& \textbf{17.16}\\
    0.01  & 18.28 & - &- &-  & \textbf{16.91} \\
    0.02  & 17.82 & - & - &-  & \textbf{17.52}\\
    0.04  & 18.84 & - & - &- & \textbf{17.83}\\
    \bottomrule
  \end{tabular}
  \caption{Best validation perplexity per norm setting
           and learning rate for GPT Models. Bold marks the best entry in each row.
           \textbf{None} is the Muon baseline.}
  \label{tab:norm_lr_ppl}
\end{table}

\begin{figure}[H]
    \centering
    \includegraphics[width=0.7\linewidth]{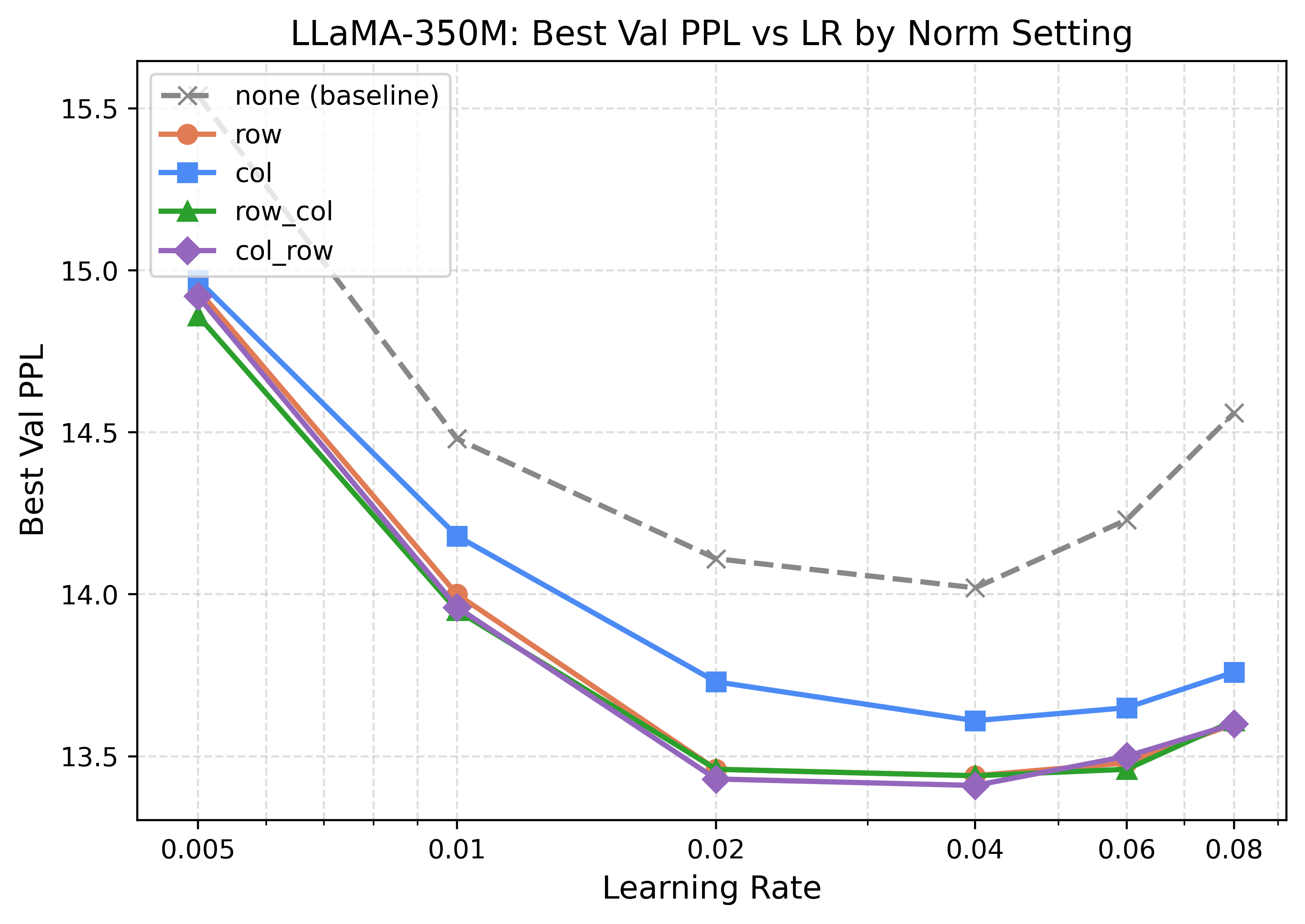}
    \caption{Visualization of sweep result for LLaMA-350M}
    \label{fig:sweep_llama-350M}
\end{figure}

\begin{table}[h]
\centering
\begin{tabular}{lcc}
\toprule
Model & Muon & MUON+ \\
\midrule
GPT-Base   & 21.68 $\pm$ 0.050 & 19.98 $\pm$ 0.031 \\
LLaMA-350M & 14.06 $\pm$ 0.046 & 13.42 $\pm$ 0.020\\
\bottomrule
\end{tabular}
\caption{Mean validation perplexity over 5 random seeds. We report mean $\pm$ standard deviation for GPT-Base and LLaMA-350M. Lower is better. Hyperparameters are consistent with Table~\ref{tab:norm_lr_ppl} and ~\ref{tab:llama_all_norm_lr}.}
\label{tab:var_bar}
\end{table}
\paragraph{Remark.}
The proposed normalization changes the Frobenius norm of the update matrix and may thus induce an implicit layer-wise learning-rate rescaling. One may worry that the gains of \textsc{Muon+} come from this effect rather than from reduced parameter-space variance. However, our learning-rate sweeps show that the optimal learning rate of \textsc{Muon+} is essentially unchanged from that of Muon (Tables~\ref{tab:llama_all_norm_lr} and~\ref{tab:norm_lr_ppl}, Figure~\ref{fig:sweep_llama-350M}), suggesting that the improvement is not due to a shift in the effective learning rate. We further verify this by matching the Frobenius norm of the normalized update to its pre-normalization value. The results are shown in Table~\ref{tab:match_F_norm}.

\begin{table}[h]
\centering
\begin{tabular}{lcc}
\toprule
Model & Muon & MUON+ \\
\midrule
GPT-Base   & 21.70  & 20.16 \\
LLaMA-350M & 14.02 & 13.55\\
\bottomrule
\end{tabular}
\caption{Matching Frobenius norm. Hyperparameters are consistent with Table~\ref{tab:norm_lr_ppl} and ~\ref{tab:llama_all_norm_lr}.}
\label{tab:match_F_norm}
\end{table}
\clearpage
\newpage
\section{Supplementary results}
\subsection{Per-step runtime}
The results are measured on an H100 GPU with mixed precision (bf16), batch size 4, and sequence length 4096, and are reported in ms/step. Compared to Muon, all normalization variants introduce only negligible runtime overhead.
\begin{table}[h]
\centering
\begin{tabular}{lcc}
\toprule
\textbf{Method} & \textbf{Time / step} & \textbf{Ratio} \\
\midrule
Muon & 892.8 & 1.000 \\
\midrule
\textsc{Muon+} (col)      & 896.6 & 1.004 \\
\textsc{Muon+} (row)      & 900.6 & 1.008 \\
\textsc{Muon+} (col\_row) & 909.9 & 1.019 \\
\textsc{Muon+} (row\_col) & 910.3 & 1.019 \\
\bottomrule
\end{tabular}
\caption{Per-step runtime comparison between Muon and MUON+.}
\label{tab:per_step_cost}
\end{table}
\subsection{Imbalance trend}
\begin{figure}[H]
    \centering
    \includegraphics[width=\linewidth]{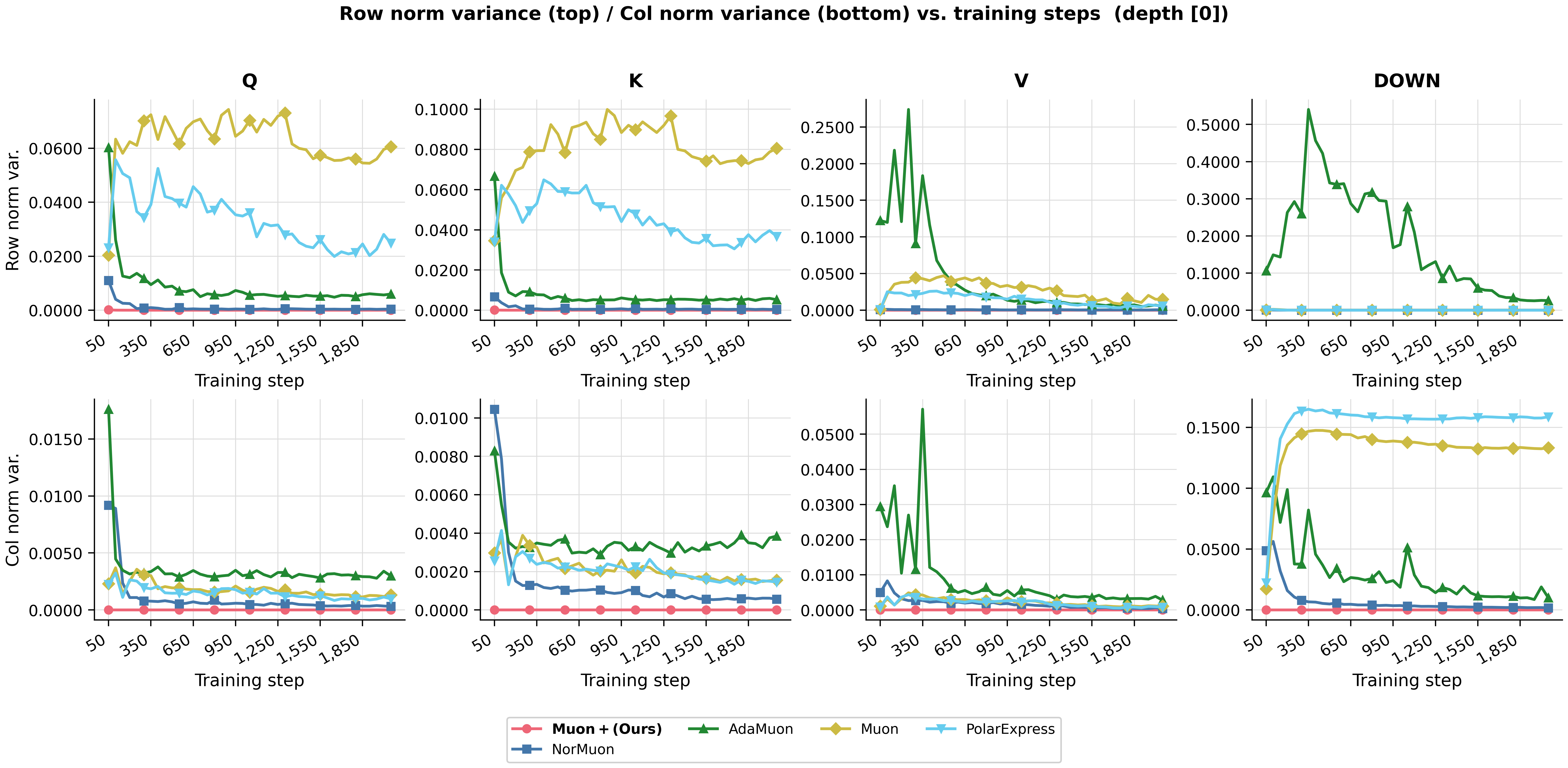}
    \includegraphics[width=\linewidth]{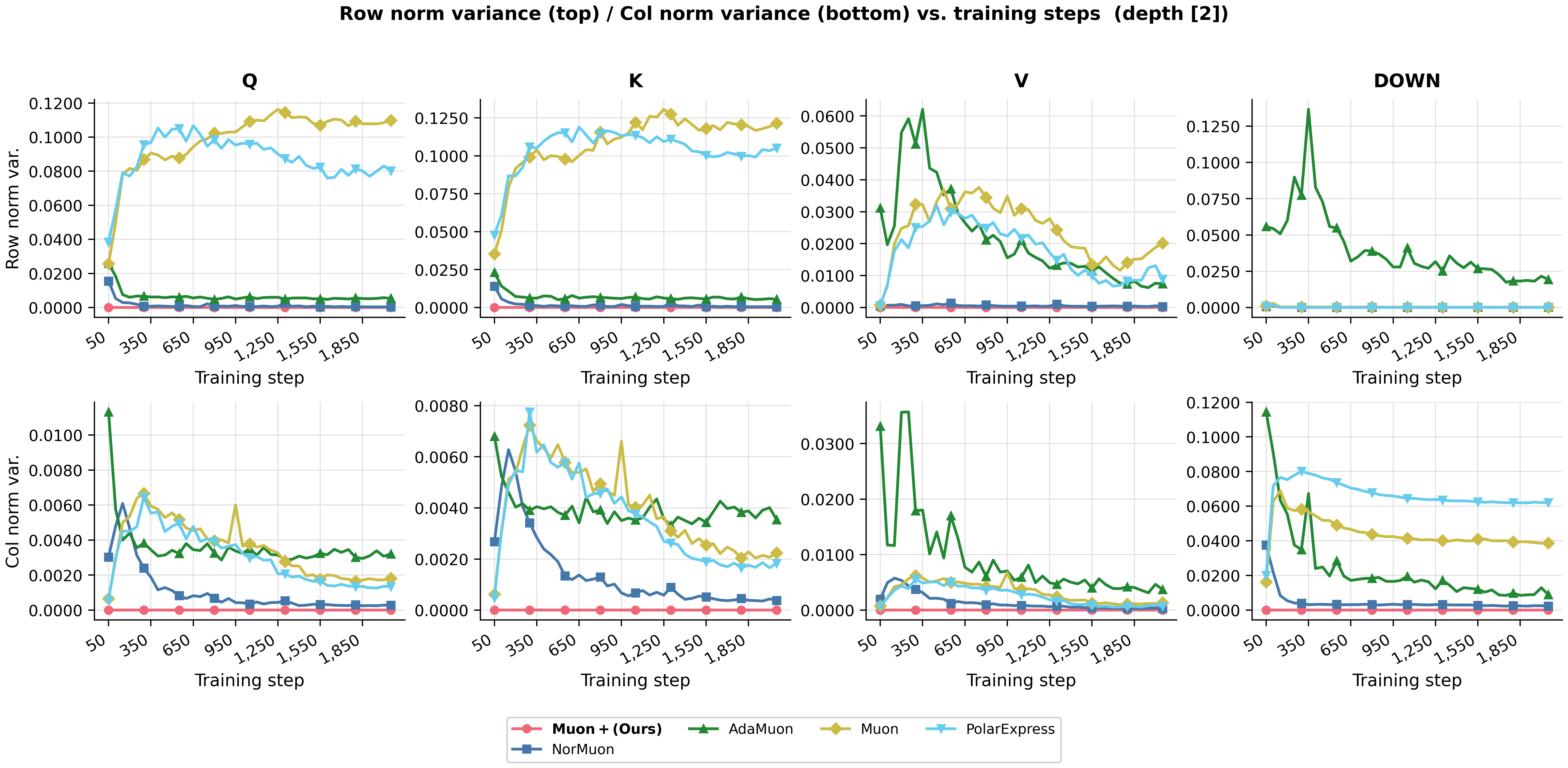}
    \includegraphics[width=\linewidth]{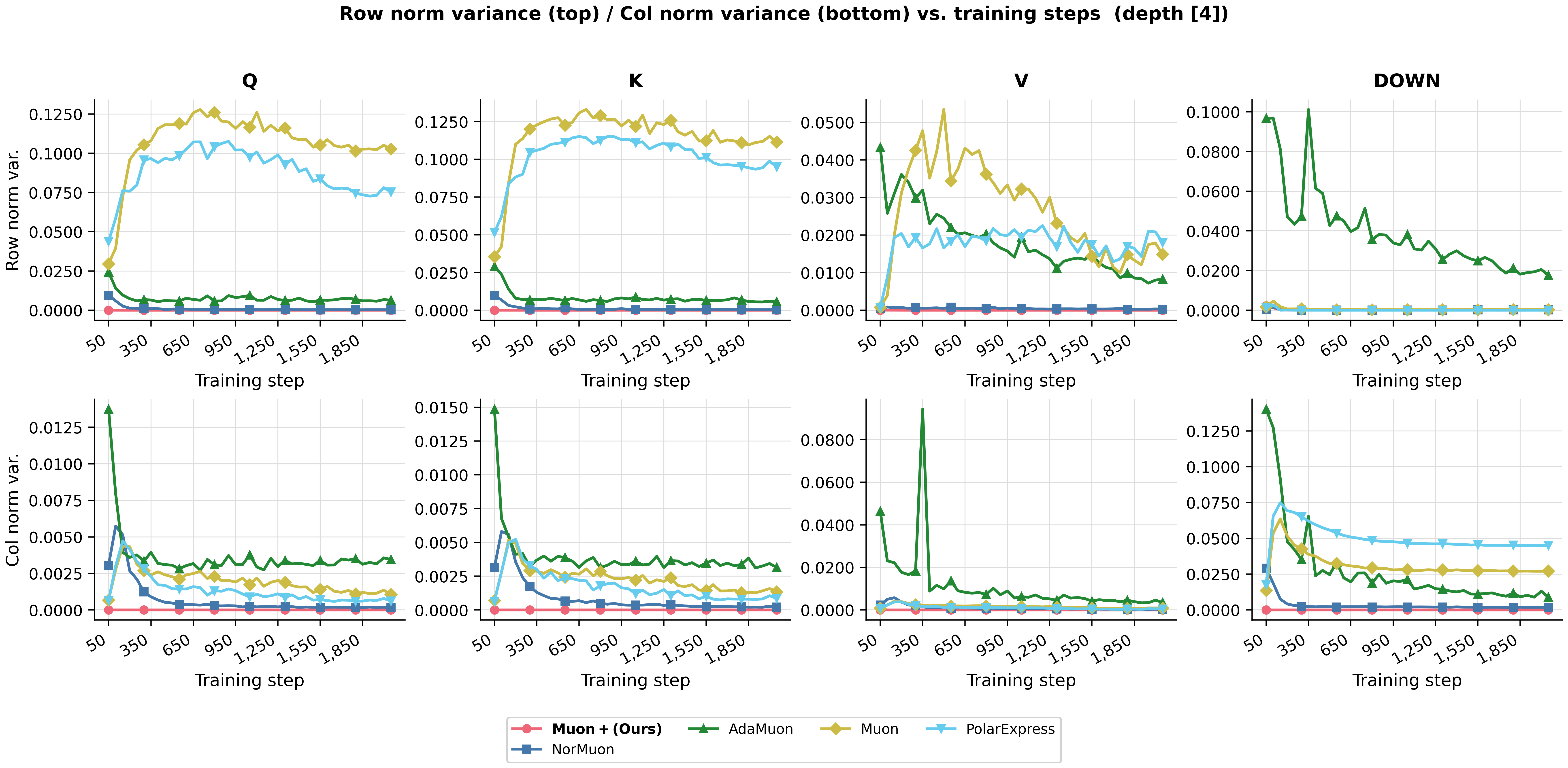}
    \caption{Detailed Imbalance Trend}
    \label{fig:var_trend_all}
\end{figure}
\end{document}